\newtheorem{proposition}{Proposition}
\newtheorem{lemma}{Lemma}
\newtheorem{assumption}{Assumption}
\newtheorem{definition}{Definition}
\newtheorem{insight}{Insight}
\def\eqref#1{equation~\ref{#1}}
\def\1{\bm{1}}
\DeclareMathAlphabet{\mathsfit}{\encodingdefault}{\sfdefault}{m}{sl}
\SetMathAlphabet{\mathsfit}{bold}{\encodingdefault}{\sfdefault}{bx}{n}
\DeclareMathOperator*{\argmax}{arg\,max}
\DeclareMathOperator*{\argmin}{arg\,min}
\DeclareMathOperator*{\Argmin}{Arg\,min}
\newcommand{\abs}[1]{\left|#1\right|}
\newcommand{\squareb}[1]{\left[ #1 \right]}
\newcommand{\roundb}[1]{\left( #1 \right)}
\newcommand{\EEX}[2]{\mathbb{E}_{#1}\squareb{ #2}}
\renewcommand{\P}{\mathcal{P}}
\newcommand{\mx}[1]{\hat{x}^{(#1)}}
\newcommand{\px}[1]{x^{(#1)}}
\newcommand{\TrV}[1]{\squareb{r\roundb{x^{(#1)}} + \gamma V_\mathrm{tar}\roundb{x^{(#1 + 1)}} }}
\newcommand{\muV}[1]{V\left(\hat{x}^{(#1)}\right)}
\newcommand{\target}[1]{\Bigl[\mathcal{T}_{\P^\pi}V_\mathrm{tar}\Bigr]\roundb{\mx{#1}}}
\icmltitlerunning{Calibrated Value-Aware Model Learning with Probabilistic Environment Models}
\definecolor{sb1}{HTML}{1f77b4}
\definecolor{newgreendeal}{HTML}{ff7f0e}
\definecolor{sb3}{HTML}{2ca02c}
\definecolor{sb4}{HTML}{d62728}
\definecolor{sb5}{HTML}{9467bd}
\definecolor{sb6}{HTML}{8c564b}
\definecolor{sb7}{HTML}{e377c2}
\definecolor{sb8}{HTML}{7f7f7f}
\definecolor{sb9}{HTML}{bcbd22}
\definecolor{sb0}{HTML}{17becf}
\definecolor{newgreendeal}{RGB}{200, 100, 10}
\definecolor{newbluedeal}{RGB}{20, 90, 150}
\newenvironment{boxinsight}[1][]{\begin{tcolorbox}[boxrule=0.2mm,colback=white,colframe=uoftblue,boxsep=0pt,top=3pt,bottom=5pt]\begin{insight}[#1]}{\end{insight}\end{tcolorbox}}
\begin{document}

\twocolumn[
\icmltitle{Calibrated Value-Aware Model Learning with Probabilistic Environment Models}

\icmlsetsymbol{equal}{*}

\begin{icmlauthorlist}
\icmlauthor{Claas Voelcker}{to,ve}
\icmlauthor{Anastasiia Pedan}{ki,to,ve}
\icmlauthor{Arash Ahmadian}{co}
\icmlauthor{Romina Abachi}{ub}
\icmlauthor{Igor Gilitschenski}{to,ve}
\icmlauthor{Amir-massoud Farahmand}{mo,mi,to}
\end{icmlauthorlist}

\icmlaffiliation{to}{Department of Computer Science, University of Toronto, Canada}
\icmlaffiliation{ki}{Igor Sikorsky Kyiv Polytechnic Institute, Kyiv, Ukraine}
\icmlaffiliation{ve}{Vector Institute, Toronto, Canada}
\icmlaffiliation{co}{Cohere, Toronto, Canada}
\icmlaffiliation{ub}{Ubisoft, Montreal, Canada}
\icmlaffiliation{mo}{Polytechnique Montreal, Canada}
\icmlaffiliation{mi}{MILA, Montreal, Canada}

\icmlcorrespondingauthor{Claas Voelcker}{cvoelcker@cs.toronto.edu}

\icmlkeywords{Machine Learning, ICML}

\vskip 0.3in
]

\printAffiliationsAndNotice{~}  %
\begin{abstract}
The idea of value-aware model learning, that models should produce accurate value estimates, has gained prominence in model-based reinforcement learning.
The MuZero loss, which penalizes a model's value function prediction compared to the ground-truth value function, has been utilized in several prominent empirical works in the literature.
However, theoretical investigation into its strengths and weaknesses is limited.
In this paper, we analyze the family of value-aware model learning losses, which includes the popular MuZero loss.
We show that these losses, as normally used, are uncalibrated surrogate losses, which means that they do not always recover the correct model and value function.
Building on this insight, we propose corrections to solve this issue.
Furthermore, we investigate the interplay between the loss calibration, latent model architectures, and auxiliary losses that are commonly employed when training MuZero-style agents.
We show that while deterministic models can be sufficient to predict accurate values, learning calibrated stochastic models is still advantageous.
\end{abstract}

\section{Introduction}
In model-based reinforcement learning, an agent collects information in an environment and uses it to learn a model of the world. 
This model is used to improve value estimation and policy learning~\citep{dyna,deisenroth2011pilco, Hafner2020Dream,schrittwieser2020mastering}.
However, as environment complexity increases, learning a model becomes more and more challenging. 
This leads to model errors which propagate to value function learning~\citep{schneider1997exploiting,kearns2002near,talvitie2017self,lambert202objective}.
In such cases, deciding what aspects of the environment to model is crucial.

The paradigm of \emph{value-aware model learning} (VAML)~\citep{vaml} and \emph{value-equivalence}~\citep{grimm2020value,grimm2021proper} addresses this by training models that lead to accurate value estimation.
Prominent value-aware model learning approaches are MuZero~\citep{schrittwieser2020mastering} and IterVAML~\citep{itervaml}.
The MuZero loss has been shown to perform well in discrete~\citep{schrittwieser2020mastering,ye2021mastering} and continuous control tasks~\citep{hansen2022temporal,hansen2024tdmpc}, but has received little theoretical investigation.
On the other hand, IterVAML is a theoretically motivated algorithm but not commonly used in empirical work.
We show that MuZero and IterVAML can be unified in a family of losses, which we term $(m,b)$-Value-Aware Model Losses ($(m,b)$-VAML).
The name stresses the two core hyperparameters: the model rollout steps, $m$, and steps used to estimate the bootstrapped value function target, $b$.

The $(m,b)$-VAML losses are used as surrogate losses in place of other value- or model learning losses.
Therefore, it is important to ask whether they are calibrated \citep{steinwart2008support}.
A calibrated loss does not lead to suboptimal minima when the function class includes optimal functions for the original target loss.

\textbf{Research question:} This paper has two parts, each with a theoretical and empirical section. We answer two questions about the $(m,b)$-VAML family: (a) What variants of the $(m,b)$-VAML losses are well-calibrated to recover correct models and value functions? (b) Do we observe problems with uncalibrated losses when using standard architectures, especially deterministic latent-space models?

\textbf{Contributions:} 
As our main theoretical contribution, we mathematically analyze the family of $(m,b)$-VAML algorithms.
We prove that all sampled-based loss variants from this family are uncalibrated when used with a stochastic environment model.
Minimizing the losses with regard to data samples will result in value functions and models with lower variance than the correct ground-truth functions, even if the dataset adequately covers the state-action space.
To counter this issue, we derive a novel loss variant.

In the second part, we address issues arising from the way current algorithms in the $(m,b)$-VAML family are commonly implemented.
We prove that a stochastic model class is not necessary to learn a single-step decision-equivalent model in stochastic environments.
This validates the practice of primarily using deterministic models in empirical work~\citep{oh2017value,schrittwieser2020mastering,hansen2022temporal}.
Empirically we find that using stochastic models can still lead to improved performance, although this is environment-dependent.

\section{Background}
\begin{figure*}[!th]
    \centering
    \begin{minipage}{0.67\linewidth}
        \includegraphics[width=\textwidth]{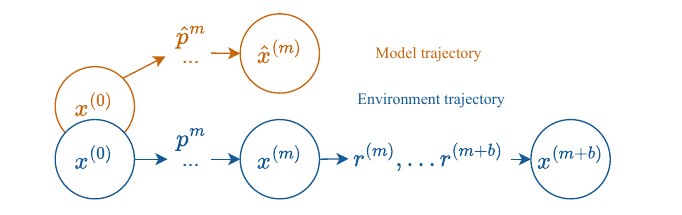}
    \end{minipage}
    \begin{minipage}{0.32\linewidth}
    \begin{align*}
        &\hat{\mathcal{L}}^{\P^\pi}_{m,b}\left(\hat{p},\hat{V}|x\right)\\
        = &\biggl|{\color{newgreendeal}\underbrace{\hat{V}\roundb{\hat{x}^{(m)}}}_\mathrm{model}} - 
        \biggl[ {\color{newbluedeal} \underbrace{[\mathcal{T}_{\P^\pi}^b \hat{V}]\roundb{x^{(m)}} }_\mathrm{environment} } \biggr]_\mathrm{sg}\biggr|^2
    \end{align*}
    \end{minipage}
    \caption{Sketch of $(m,b)$-VAML. The loss is computed from an {\color{newgreendeal}$m$-step model} and a {\color{newbluedeal} $(m+b)$-step environment trajectory}. It is the difference between the estimated value of the {\color{newgreendeal}$m$-th model state}, and the $b$-step Bellman operator starting from the {\color{newbluedeal} $m$-th environment state}.}
    \label{fig:loss_graphic}
\end{figure*}

\textbf{Reinforcement Learning:} We consider a standard Markov Decision Process (MDP)~\citep{Puterman1994MarkovDP} $(\mathcal{X}, \mathcal{A}, \P, r, \gamma)$, with state space $\mathcal{X}$, action space $\mathcal{A}$, transition kernel $\P(x'|x,a)$, reward function $r: \mathcal{X} \times \mathcal{A} \xrightarrow[]{} \mathbb{R}$, and discount factor $\gamma \in [0,1)$.
A policy $\pi(a|s)$ maps a state to a distribution over actions.
The goal of a reinforcement learning agent is to find a policy maximizing the expected discounted infinite horizon reward, i.e., $\max_\pi \mathbb{E}_{\pi,\P}\squareb{\sum_{t=0}^{\infty} \gamma^t r(x_t, a_t)\,|\,x_0}$.

The value function is defined as the expected return of the policy $\pi$: $V^\pi(x) = \EEX{\pi,\P}{\sum_{t \geq 0} \gamma^t r(x_t,a_t)|x_0 = x}$.
The policy-conditioned transition kernel is $\P^\pi(x'|x) = \int \P(x'|x,a) \pi(a|x) \mathrm{d} a.$
The value is the unique fixed point of the Bellman operator $$[\mathcal{T}_{\P^\pi} V](x) = \EEX{\pi}{r(x_0,a_0) + \gamma  \EEX{\P^\pi}{V(x_1)}|x_0=x}.$$
This operator can be extended to a multi-step version as $$[{\mathcal{T}}^b_{\P^\pi} V](x) = \EEX{\pi,\P}{\sum_{n=0}^{b-1} \gamma^n r(x_n,a_n) + \gamma^b {V(x_b)}\Big|x_0=x}.$$
We also define $[\mathcal{T}_{\P^\pi}^0 V](x) = V(x).$

\textbf{Model-based RL:} An \emph{environment model} is a function that approximates the transition kernel.\footnote{In this paper, we will generally use the term \emph{model} to refer to an environment model, not to a neural network, to keep consistent with the reinforcement learning nomenclature.}
Learned models are used to augment RL algorithms in several ways.
For a comprehensive survey, refer to \citet{moerland}.
In this paper, we focus on value learning with model data, which is commonly referred to as Dyna~\citep{dyna}.

We use $\hat{p}$ to refer to a stochastic model, and $\hat{f}$ for deterministic models.
When a model is used to predict the observation $x'$ from $x,a$ (such as the model used in MBPO \citep{mbpo} we call it an \emph{observation-space models}.
Alternatively, \emph{latent-space models} of the form $\hat{p}(z'|z, a)$ are used, where $z\in\mathcal{Z}$ is a representation of a state $x\in\mathcal{X}$ given by $\varphi: \mathcal{X} \rightarrow \mathcal{Z}$.
Observation-space models predict next states in the representation of the environment, while latent-space models reconstruct learned features of these states.

The notation $x^{(n)}$ refers to the $n$-th step in a rollout starting in state $x$ in the environment.
We will use $\hat{x}^{(n)}$ to refer to samples from the $n$-th step model prediction and write $\EEX{\hat{p}^{m}}{\cdot}$ as a shorthand for $\EEX{{x}^{(m)}\sim \hat{p}^{(m)}(\cdot|x)}{\cdot}$.

\section{The Value-Aware Model Learning framework}
\label{sec:model_losses}

The losses of the decision-aware learning framework share the goal of finding models that provide good value function estimates.
Instead of simply learning a model using maximum likelihood estimation, the losses are based on differences in value prediction.

\subsection{Iterative Value Aware Model Learning}

Iterative Value Aware Model Learning (IterVAML) \citep{itervaml} computes the difference between the expected value under the model and samples in the environment
\begin{equation}\label{eq:itervaml}
\begin{aligned}
     &{\mathcal{L}}^{\P^\pi}_{\mathrm{IterVAML}, m}\left(\hat{p},V|x\right) \\
    & \quad = \biggl|\mathbb{E}_{\hat{p}^m} \squareb{V\roundb{\hat{x}^{(m)}}} - 
    \squareb{\EEX{\P^\pi}{V\roundb{x^{(m)}}}}_\mathrm{sg}\biggr|^2 \\
    & \quad \approx \biggl|\frac{1}{K}\sum_{k=1}^K \squareb{V\roundb{\hat{x}_k^{(m)}}} - 
    \squareb{V\roundb{x^{(m)}}}_\mathrm{sg}\biggr|^2. 
\end{aligned}
\end{equation}
We will use $\mathcal{L}_{\mathrm{IterVAML}, m}$ to refer to the expectation-based version, and $\hat{\mathcal{L}}^k_{\mathrm{IterVAML}, m}$ to refer to the sampling-based version.
$\squareb{\cdot}_\mathrm{sg}$ denotes the stop-gradient operation.
To reduce notational complexity, we drop the action dependence in all following propositions; all results hold without loss of generality for the action-conditioned case as well.
The expectation-based version of the IterVAML loss has an important relationship to the error of computing the model's Bellman operator compared to the true environments Bellman operator.

\begin{restatable}{proposition}{IterVAMLEqual}{\citet{itervaml}}\label{prop:1_1}
    Let $\P^\pi$ be the policy-conditioned transition kernel of an MDP and let $V: \mathcal{X} \rightarrow \mathbb{R}$ be a function.
    Let $\hat{p}(\cdot|x)$ be a model so that ${\mathcal{L}}^{\P^\pi}_{\mathrm{IterVAML}, 1}\left(\hat{p},V|x\right) = 0$.
    Then $\squareb{\mathcal{T}_{\P^\pi} V }(x) = \squareb{\mathcal{T}_{\hat{p}} V }(x)$.
\end{restatable}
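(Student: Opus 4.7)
The plan is a direct unfolding of definitions: a zero $(1,0)$-IterVAML loss asserts the equality of two scalar expectations, and the two Bellman operators $\mathcal{T}_{\P^\pi}V$ and $\mathcal{T}_{\hat p}V$ differ only in exactly this expectation, so the equality of the operators at $x$ follows mechanically.

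First I would write out the hypothesis. Since the squared quantity vanishes, the inner difference does as well, i.e.\
$$ \mathbb{E}_{\hat{p}(\cdot|x)}\!\left[V(\hat{x}^{(1)})\right] \;=\; \mathbb{E}_{\P^\pi(\cdot|x)}\!\left[V(x^{(1)})\right]. $$
Next I would unfold both Bellman operators applied to $V$ at the state $x$. Using the definition of $\mathcal{T}_{\P^\pi}$ from the Background section and the analogous definition obtained by swapping the true kernel with the learned kernel,
$$ [\mathcal{T}_{\P^\pi} V](x) \;=\; \bar r(x) + \gamma\,\mathbb{E}_{\P^\pi(\cdot|x)}\!\left[V(x^{(1)})\right], $$
$$ [\mathcal{T}_{\hat p} V](x) \;=\; \bar r(x) + \gamma\,\mathbb{E}_{\hat{p}(\cdot|x)}\!\left[V(\hat{x}^{(1)})\right], $$
where $\bar r(x) = \mathbb{E}_\pi[r(x,a)\mid x]$ is the same in both expressions because the model only replaces the transition kernel and not the (known) reward. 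Subtracting the two and invoking the equality of expectations from the previous step yields $[\mathcal{T}_{\P^\pi}V](x) - [\mathcal{T}_{\hat p}V](x) = 0$, which is the claim.

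There is no genuine obstacle here: the argument is purely a definitional rewrite, and the only thing to be careful about is notational. In particular I would make explicit that action dependence has been suppressed following the convention stated just before the proposition (so $\hat{p}(\cdot|x)$ and $\P^\pi(\cdot|x)$ are policy-marginalised next-state distributions), and that $\mathcal{T}_{\hat p}$ denotes the Bellman operator obtained by substituting $\hat p$ for $\P^\pi$ while keeping the true reward. The statement is pointwise in $x$, so no integration or measurability subtleties arise, and the claim extends verbatim to the action-conditioned version by conditioning on $a$ throughout.
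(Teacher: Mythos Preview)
Your proposal is correct and follows the same approach as the paper: deduce the equality of the two next-state value expectations from the zero loss, then observe that the Bellman operators at $x$ differ only in precisely this term. The paper's proof is essentially a one-line version of what you wrote.
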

We give a short proof in \autoref{proofs:entry}

Intuitively, a model achieving $0$ loss can be used instead of the ground truth environment when computing the Bellman operator.
The set of models that achieve $0$ IterVAML error is equal to the set of value-equivalent models for the value estimate $\hat{V}$ \cite{grimm2021proper}.

\subsection{The (m,b)-VAML Family}
The MuZero loss~\citep{schrittwieser2020mastering} was introduced to unify the value function and model learning components of an MBRL algorithm.
It can be interpreted as a variation of the IterVAML loss that uses a single sample estimate of the Bellman Operator and a bootstrapped target estimate. 
To unify the losses, we present them in a single equation with two important hyperparameters: $m$, the number of steps in the trajectory that the loss is computed over, and $b$, the number of steps for the multi-step Bellman operator.
We refer to this unified family of losses as $(m,b)$-Value Aware Model Losses (VAML)
\begin{align}
    & \hat{\mathcal{L}}^{\P^\pi}_{m,b}\left(\hat{p},\hat{V}|V_\mathrm{tar},x\right)\nonumber\\
    & \quad = \mathbb{E}\Bigg[\biggl|\hat{V}\roundb{\hat{x}^{(m)}} - 
    \squareb{[\mathcal{T}_{\P^\pi}^b V_\mathrm{tar}]\roundb{x^{(m)}}}_\mathrm{sg}\biggr|^2\Bigg|x\Bigg] \label{eq:dal_loss}
\end{align}
where $V_\mathrm{tar}$ is a target network \citep{mnih2013playing} and $x^{(m)}$ and $\hat{x}^{(m)}$ are sampled independently from $\mathcal{P}^\pi$ and $\hat{p}$ respectively.
Note that samples from the real environment are used to approximate $[\mathcal{T}_{\P^\pi}^b V_\mathrm{tar}]$.
The loss function and the relation of the model and environments rollout are visualized in \autoref{fig:loss_graphic}.

Several works use variations of this loss:
The original MuZero algorithm \citep{schrittwieser2020mastering} and follow-up work \citep{ye2021mastering,antonoglou2022planning} use $m\geq1$ and $b\geq1$.
In continuous control, $m\geq1,b=1$ has been used in the TD-MPC line of work \citep{hansen2022temporal,hansen2024tdmpc}.
When using only a single sample $k=1$, the sample-based IterVAML loss is equal to $\hat{\mathcal{L}}^{\P^\pi}_{m,0}$.
Note that the $(m,b)$-loss can easily be extended to a $k$ sample variant analogous to the IterVAML loss, which we will use later.
We dropped the summation over $k$ samples here to simplify the (already dense) notation.
Finally, regular model-free TD learning corresponds to $m=0,b\geq1$.

\section{Analysis of decision-aware losses in stochastic environments}
\label{sec:theory_1}
The goal of learning in MBRL is to recover an (approximately) optimal model and to learn a correct value function.
As we have shown, minimizing the IterVAML loss perfectly results in a model that leads to a correct Bellman Operator.
However, in practice, the inner expectation of the IterVAML loss has to be replaced by a sampling-based approximation.
In addition, $(m,b)$-VAML is used in the MuZero algorithm to update the value function directly in addition to the model.
We show when this leads to learning correct models and value functions asymptotically.
An overview of our conclusions can be found in \autoref{tab:bias_overview}.

\subsection{Calibration of surrogate loss functions}

Formally, we ask whether the surrogate $(m,b)$-VAML is \emph{calibrated}.
Intuitively, a calibrated surrogate loss does not select a suboptimal function for the target loss.
Formally, we require that the surrogate loss is not perfectly minimized by a function that does not perfectly minimize the target loss.
Therefore, we call losses that do not have this property \emph{minimum-uncalibrated} losses, or simply uncalibrated outside of formal statements.
We borrow the concept from \citet{steinwart2008support}, however, we use a slightly more restricted definition of \emph{uncalibrated} here.
This means that all losses which we consider uncalibrated are also uncalibrated in the definition given in \citet{steinwart2008support}, but not vice versa.
\begin{definition}[Minimum-uncalibrated surrogate losses]
    Let $\mathcal{L}_\mathrm{tar}(f,x,x^{(m)})$ be a loss function defined over samples from an MDP.
    Let $\hat{\mathcal{L}}_\mathrm{sur}$ be a surrogate function for the loss.
    Let $\mathcal{F}^*$ be a set of (perfect) minima of $\mathcal{L}_\mathrm{tar}$ so that $\mathcal{L}_\mathrm{tar}(f^*,x,x^{(m)}) = 0$ for all $f^* \in \mathcal{F}^*.$
    A surrogate function is minimum-uncalibrated for the target loss if there exists an MDP, a function class $\mathcal{F}$ with $\mathcal{F}^* \subseteq \mathcal{F}$, and a state $x$ so that 
    $$\argmin_{f \in \mathcal{F}} \mathbb{E}_{x^{(m)}\sim\P^\pi(\cdot|x)}\left[\mathcal{L}_\mathrm{sur}(f,x,x^{(m)})\right] \not\subseteq \mathcal{F}^*.$$
\end{definition}
An example of minimum-uncalibrated-ness is the \emph{double sampling issue} in Bellman Residual Minimization (BRM).
While the target loss is minimized by the ground truth value function, BRM actually chooses a function that minimizes the Bellman error and and additional variance term.

We will show that some issues introduced by naively using $(m,b)$-VAML can be fixed with a tractable modification to the loss.
We call such cases \emph{resolvable}.

\begin{table*}[th]
\centering
    \begin{tabular}{l| l|c|c|c}
        && IterVAML &  MuZero &  Model-free TD \\
        Model & Env. & $(m\geq1,b=0)$ &  $(m\geq1,b\geq1)$& $(m=0,b\geq1)$\\\hline
        Det. model,& det. env & {\color{newbluedeal} calibrated} & {\color{newbluedeal} calibrated} & {\color{newbluedeal} calibrated} \\
        Stoch. model,& det. env & {\color{newgreendeal} uncalibrated (resolvable)} & {\color{uoftred} uncalibrated (for VF updates)} & {\color{newbluedeal} calibrated}\\
        Det. model,& stoch. env & {\color{newbluedeal} calibrated} & {\color{newbluedeal} calibrated}  & {\color{newbluedeal} calibrated} \\
        Stoch. model,& stoch. env & {\color{newgreendeal} uncalibrated (resolvable)} & {\color{uoftred} uncalibrated (for VF updates)} & {\color{newbluedeal} calibrated} \\\hline
        \multicolumn{2}{l|}{Can update model} & {\color{newbluedeal} $\checkmark$} & {\color{newbluedeal} $\checkmark$}& \color{uoftred} X \\
        \multicolumn{2}{l|}{Can update VF}   & \color{uoftred} X & {\color{newgreendeal} $\checkmark$ (for det. model)}  & {\color{newbluedeal} $\checkmark$}
    \end{tabular}
    \caption{Comparison of the major design choices in the $(m,b)$-VAML framework. IterVAML and MuZero can be used to update the model, while MuZero and Model-free TD learning can be used to update the value function. All model-based losses are uncalibrated when applied to stochastic model classes. In addition, MuZero suffers a bias when used for value function prediction which cannot be surmounted with an easy modification to the loss function.}
    \label{tab:bias_overview}
\end{table*}

\subsection{Model learning bias with stochastic models}

Most prior work uses $(m,b)$-VAML with deterministic models.
Exceptions are \citet{voelcker2022value} and \citet{antonoglou2022planning}, but neither changes the loss functions to account for the model parametrization.
For a stochastic model class, $(m,0)$-VAML is uncalibrated for the target loss $\mathcal{L}_\mathrm{IterVAML}$.

We begin with analyzing the loss for $m\geq1$ and $b=0$.
For simplicity we assume that $V_\mathrm{tar} = \hat{V}$.
\begin{restatable}{proposition}{IterVAMLUncal}\label{prop:2_1}
    Let $\hat{\mathcal{L}}^{\P^\pi}_{m\geq1,0}(\hat{p},V|x,x^{(m)})$ be the surrogate loss for $\mathcal{L}_{\mathrm{IterVAML}, m}^{\P^\pi}(\hat{p}, V|x)$.
    Let $\P^*$ be the set of all distributions $p$ for which $\EEX{p}{V(\hat{x}^{(m)})} = \EEX{\P^\pi}{V(x^{(m)})}.$
    There exist an MDP and a class of distributions $\mathbb{P}$ with $\P^\pi \in \P^* \subseteq \mathbb{P}$ so that $$\Argmin_{\hat{p} \in \mathbb{P}} \mathbb{E}_{\hat{p}}\left[\hat{\mathcal{L}}^{\P^\pi}_{m\geq1,0}(\hat{p},V|x,x^{(m)})\right] \not\subseteq \P^*.$$
    Therefore, $\hat{\mathcal{L}}_\mathrm{model}$ is minimum-uncalibrated.
\end{restatable}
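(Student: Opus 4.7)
My plan is to exploit the double-sampling decomposition of the squared-difference surrogate. Because $\hat{x}^{(m)}\sim \hat{p}^m(\cdot|x)$ and $x^{(m)}\sim (\P^\pi)^m(\cdot|x)$ are drawn independently, expanding the square and applying $\mathbb{E}[Z^2] = (\mathbb{E} Z)^2 + \mathrm{Var}(Z)$ to each side gives
\begin{equation*}
    \mathbb{E}\bigl[|V(\hat{x}^{(m)}) - V(x^{(m)})|^2\bigr] = \bigl(\mathbb{E}_{\hat{p}^m}[V] - \mathbb{E}_{(\P^\pi)^m}[V]\bigr)^2 + \mathrm{Var}_{\hat{p}^m}[V] + \mathrm{Var}_{(\P^\pi)^m}[V].
\end{equation*}
The first summand is precisely $\mathcal{L}_{\mathrm{IterVAML},m}(\hat{p},V|x)$, the third is independent of $\hat{p}$, and the middle is a spurious variance penalty that biases the argmin toward models concentrated in $V$-space. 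Consequently, minimising the surrogate is equivalent to minimising $\mathcal{L}_{\mathrm{IterVAML},m}(\hat{p},V|x) + \mathrm{Var}_{\hat{p}^m}[V]$, which in general is not minimised over $\P^*$.

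Next I would exhibit a concrete MDP that realises this bias. Set $m=1$, take a single source state $x$ with two-point successor support $\{y_1,y_2\}$, and let $V(y_1)=0$, $V(y_2)=2$, so that the uniform $\P^\pi(\cdot|x)$ has $\mathbb{E}_{\P^\pi}[V]=1$. Let $\mathbb{P}=\{p_\alpha\}_{\alpha\in[0,1]}$ be all Bernoullis on $\{y_1,y_2\}$ with $\alpha=p(y_2)$. Then $\P^*=\{p_{1/2}\}=\{\P^\pi\}\subseteq\mathbb{P}$, while a direct computation gives
\begin{equation*}
    (2\alpha-1)^2 + 4\alpha(1-\alpha) \equiv 1,
\end{equation*}
so every $p_\alpha$ attains the surrogate's minimum. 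Hence $\Argmin_{\hat p\in\mathbb{P}}\mathbb{E}[\hat{\mathcal{L}}^{\P^\pi}_{1,0}] = \mathbb{P}\not\subseteq\P^*$, yielding the required uncalibratedness. The extension to $m\ge2$ follows by prepending a deterministic transition, so the statement holds for every $m\ge1$.

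The main subtle point is the geometry of $V$ on the successor states. If $V$ took the value $\mathbb{E}_{\P^\pi}[V]$ at some single successor $y^\star$, then $\delta_{y^\star}\in\P^*$ would achieve zero variance, collapse the $\Argmin$ into $\P^*$, and destroy the counterexample. The construction therefore hinges on choosing $V$ values whose $\P^\pi$-mean is \emph{not} attainable at any single successor, forcing every admissible model to pay a non-trivial variance penalty that exactly cancels the bias term along the whole Bernoulli family.
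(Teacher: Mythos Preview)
Your argument is correct and rests on the same bias--variance decomposition as the paper: both of you reduce the expected surrogate to $\mathcal{L}_{\mathrm{IterVAML},m}(\hat p,V|x)+\mathrm{Var}_{\hat p^m}[V]$ plus a $\hat p$--independent constant. Where you diverge is in the conclusion step. The paper proceeds abstractly via three helper lemmata: under the assumption that no single successor $y$ satisfies $V(y)=\mathbb{E}_{\P^\pi}[V]$, it exhibits a zero-variance distribution $q$ that \emph{strictly} beats every element of $\P^*$, so in fact $\Argmin\cap\P^*=\emptyset$. You instead give a concrete two-point example in which $(2\alpha-1)^2+4\alpha(1-\alpha)\equiv 1$, so the surrogate is constant over the whole Bernoulli family and $\Argmin=\mathbb{P}\supsetneq\P^*$. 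Your construction is more elementary and fully sufficient for the existential claim in the proposition; the paper's route is slightly more general (it yields a strict separation rather than a tie, and its lemmata also handle the $k$-sample variant uniformly). Interestingly, your example sits exactly at the boundary of the paper's Lemma~4 condition $(\mathbb{E}_q[V]-\mathbb{E}_{\P^\pi}[V])^2<\frac{1}{k}\mathrm{Var}_{p^*}[V]$, which is why you obtain equality rather than strict improvement. Your closing remark about the necessity of $\mathbb{E}_{\P^\pi}[V]$ being unattainable at a single successor is precisely the paper's standing assumption for its lemmata.
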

Proofs for this section can be found in \autoref{proofs:IterVAML}.
When using samples from a stochastic model to compute the loss function in \autoref{eq:dal_loss}, we are left with a variance error term that is closely related to the double-sampling problem
\begin{align}
 &\mathbb{E}_{\hat{p}}[\hat{\mathcal{L}}^{\P^\pi}_{1,0}(\hat{p},V|x)] \nonumber\\
 &\quad = {\mathcal{L}}^{\P^\pi}_{\mathrm{IterVAML}, 1}\left(\hat{p},V|x,x'\right) + \mathrm{Var}_{\hat{p}}\left(V(\hat{x})\right).
\end{align}
In the classic double-sampling problem, we cannot correct the variance term as we do not have oracle access to the environment.
Here the issue is our model, and we can generate multiple samples from it.
Therefore, we can estimate this variance term from samples and correct the loss.
This correction is reminiscent of \citet{antos2008learning} but is simpler to obtain as we only need model samples.

To obtain the correction, we define $\hat{\mu}_{\hat{p}}^{m,k} = \frac{1}{k}\sum_{i=1}^k V(\hat{x}_i^{(m)})$, the empirical estimator of the expected return in \autoref{eq:itervaml}.
The variance of this estimator can be estimated as $\widehat{\mathrm{Var}}_{\hat{p}}^{m,k} = \frac{1}{k} \sum_{i=1}^k (V(\hat{x}_i^{(m)}) - \mu_{\hat{p}}^{m,k})^2$.
With this we can define a new loss which can be computed with at least two samples from the model
$$\hat{\mathcal{L}}_{\mathrm{CVAML}, m}^{k} = \hat{\mathcal{L}}^k_{\mathrm{IterVAML}, m} - \widehat{\mathrm{Var}}_{\hat{p}}^{i,k}.$$
We refer to the loss as \emph{Calibrated VAML} (CVAML).

\begin{restatable}{proposition}{IterVAMLVar}\label{prop:2_2}
    The variance-corrected loss $\hat{\mathcal{L}}_{\mathrm{CVAML}, m}^{k}(\hat{P}, V|x, x^{(m)})$ is a calibrated surrogate loss for $\mathcal{L}_{\mathrm{IterVAML}, m}^{\P^\pi}(\hat{P}, V|x, x^{(m)})$.
\end{restatable}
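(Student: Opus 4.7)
The plan is to compute the total expectation of the surrogate loss $\hat{\mathcal{L}}_{\mathrm{CVAML}, m}^{k}$ (over both the $k$ model samples and the single environment sample $x^{(m)} \sim \P^\pi(\cdot|x)$) and show that, as a function of $\hat{p}$, it agrees with the target IterVAML loss up to a term that depends only on $\P^\pi$. Since that residual term is constant across $\hat{p}$, it cannot move the $\argmin$; and because the minimizers of $\mathcal{L}_{\mathrm{IterVAML}, m}^{\P^\pi}$ lie in $\P^*$ by construction (Proposition~\ref{prop:1_1}), calibration follows directly from the definition.

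\textbf{Key step: bias--variance decomposition of the sample mean.} For a fixed environment sample $x^{(m)}$, write $\mu_{\hat{p}} = \mathbb{E}_{\hat{p}^m}[V(\hat{x}^{(m)})]$ and $\sigma^2_{\hat{p}} = \mathrm{Var}_{\hat{p}^m}(V(\hat{x}^{(m)}))$. Because the $k$ model samples are i.i.d., the estimator $\hat{\mu}_{\hat{p}}^{m,k}$ is unbiased with variance $\sigma^2_{\hat{p}}/k$, so that
\begin{equation*}
\mathbb{E}_{\hat{p}}\!\left[\bigl(\hat{\mu}_{\hat{p}}^{m,k} - V(x^{(m)})\bigr)^{2}\right] = \bigl(\mu_{\hat{p}} - V(x^{(m)})\bigr)^{2} + \frac{\sigma^2_{\hat{p}}}{k}.
\end{equation*}
The second term is precisely the source of the uncalibrated-ness identified in Proposition~\ref{prop:2_1}.

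\textbf{Cancellation by the variance correction.} Next I would verify that $\widehat{\mathrm{Var}}_{\hat{p}}^{m,k}$ is an unbiased estimator of $\sigma^2_{\hat{p}}/k$, so that subtracting it eliminates the spurious variance term in expectation. Taking the full expectation then yields
\begin{equation*}
\mathbb{E}_{\hat{p}, \P^\pi}\!\left[\hat{\mathcal{L}}_{\mathrm{CVAML}, m}^{k}\right] = \mathbb{E}_{\P^\pi}\!\left[\bigl(\mu_{\hat{p}} - V(x^{(m)})\bigr)^{2}\right].
\end{equation*}
Applying a second bias--variance decomposition to the outer expectation over $x^{(m)} \sim \P^\pi$ gives $\bigl(\mu_{\hat{p}} - \mathbb{E}_{\P^\pi}[V(x^{(m)})]\bigr)^{2} + \mathrm{Var}_{\P^\pi}(V(x^{(m)}))$, where the first summand is exactly $\mathcal{L}_{\mathrm{IterVAML}, m}^{\P^\pi}(\hat{p}, V | x)$ and the second depends only on the environment.

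\textbf{Conclusion and obstacle.} The $\argmin_{\hat{p}}$ of the expected surrogate therefore coincides with the $\argmin_{\hat{p}}$ of the IterVAML loss, which is contained in $\P^*$; by the definition of minimum-calibrated-ness, this gives the claim. The main technical obstacle is the second step: the natural sample-variance normalization $\tfrac{1}{k}\sum_i(V(\hat{x}_i^{(m)}) - \hat{\mu}_{\hat{p}}^{m,k})^{2}$ has expectation $\tfrac{k-1}{k}\sigma^2_{\hat{p}}$ rather than $\sigma^2_{\hat{p}}/k$, so one must either pick the correct Bessel-style scaling (e.g.\ $\tfrac{1}{k(k-1)}\sum_i(\cdot)^{2}$) or restrict to $k=2$, where the two normalizations coincide and the cancellation is exact. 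Once that scaling bookkeeping is settled, the remainder of the argument is just two linearity-of-expectation computations.
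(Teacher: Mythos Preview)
Your proposal is correct and follows essentially the same route as the paper: take the full expectation of the surrogate, use the bias--variance decomposition of the $k$-sample mean to isolate a $\sigma_{\hat p}^2/k$ term, observe that the variance correction cancels it, and then do a second decomposition over $x^{(m)}\sim\P^\pi$ to recover $\mathcal{L}_{\mathrm{IterVAML},m}^{\P^\pi}$ plus an environment-only constant. The paper's proof compresses all of this into two lines by ``reusing the previous derivation'' from Proposition~\ref{prop:2_1}.

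Your remark about the normalization is well taken and is in fact more careful than the paper. The paper defines $\widehat{\mathrm{Var}}_{\hat p}^{m,k}=\tfrac{1}{k}\sum_i(V(\hat x_i^{(m)})-\hat\mu_{\hat p}^{m,k})^2$ and then in the proof asserts that its expectation equals $\tfrac{1}{k}\sigma_{\hat p}^2$; as you note, the biased sample variance actually has expectation $\tfrac{k-1}{k}\sigma_{\hat p}^2$, so exact cancellation requires either the $\tfrac{1}{k(k-1)}$ scaling you propose or $k=2$. The paper's proof silently assumes this bookkeeping works out; your version makes the constraint explicit.
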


Analogous to the IterVAML case, the sample-based $(m,b \geq 1)$-VAML loss is an uncalibrated loss for learning a model.

\subsection{Value learning bias in stochastic models}
\label{sec:muzero_bias}

We now show that this issue also affects the value function learning with the MuZero loss.
The problem here lies in the use of the bootstrapped Bellman target together with a multi-step value rollout.
In an MDP, the values of two states $x$ and $y$ are not guaranteed to be equal (or even particularly close) just because they share an ancestor state, unless we make assumptions about the variance of the value function over successor states.
However, the MuZero loss still minimizes the difference in value functions between these two states.

We show that the MuZero loss therefore is not guaranteed to recover the correct value function, even when we have a perfect (stochastic) model.
To formalize the issue, we compare the solution found with $(m,b)$-VAML with the regular TD loss function.
Note that for this loss, we assume that $x_{(m)}$ is a fixed sample from the environment, \emph{not} a random variable like $x^{(m)}$, with $x_{(m+1)}$ being its successor state in the ground truth environment. This distinction is important, as it is exactly what leads to the bias in the $(m,b\geq1)$ loss.
\begin{align}
    &\mathcal{L}_\mathrm{TD}(\hat{V}|V_\mathrm{tar}, x^{(m)},x^{(m+1)},r^{(m)}) \nonumber\\
    &\quad=\left(\hat{V}(x^{(m)}) - \left[r^{(m)} + \gamma V_\mathrm{tar}(x^{(m+1)})\right]\right)^2.
\end{align}

To drive home that this problem is independent of the model error, we show that the $(m,b\geq1)$-VAML loss is uncalibrated even if we substitute the ground truth environment for the learned model, and focus solely on the value learning component.
\begin{restatable}{proposition}{MuZeroValue}\label{prop:2_3}
    Let $\mathcal{L}_\mathrm{TD}(V|V_\mathrm{tar}, x^{(m)})$ be the target loss, and let $\hat{\mathcal{L}}^{\P^\pi}_{m,1}(\P^\pi,V|\allowbreak V_\mathrm{tar},x, x^{(m), r^{(m)}})$ be the surrogate loss. 
    There exists a set of functions $\mathcal{V}$ for any $V_\mathrm{tar}$ that is not a constant function, for which two conditions hold:
    \begin{enumerate}
        \item The set is complete, meaning that $[\mathcal{T}_{\mathcal{P}^\pi}V_\mathrm{tar}] \in \mathcal{V}$ for some target function $V_\mathrm{tar}$,
        \item and,
    \end{enumerate}
     $$ \Argmin_{\hat{V} \in \mathcal{V}} \mathbb{E}_{\P^\pi}\squareb{\hat{\mathcal{L}}^{\P^\pi}_{m,1}(\P^\pi, \hat{V}| V_\mathrm{tar}, x, x^{(m)})} \not \subseteq [\mathcal{T}_{\P^\pi} V_\mathrm{tar}].$$

    Therefore, $\hat{\mathcal{L}}^{\P^\pi}_{m,1}$ is minimum-uncalibrated.
\end{restatable}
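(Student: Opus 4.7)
The plan is to exploit the key structural distinction between the $(m, b\geq 1)$-VAML loss and plain TD: the input to $\hat V$ is an independent model rollout $\hat x^{(m)}$, not the environment state $x^{(m)}$ anchoring the bootstrap target. Even when $\hat p = \P^\pi$, the two $m$-step rollouts are merely identically distributed; since $m \geq 1$ and the kernel is stochastic, $\hat x^{(m)}$ and $x^{(m)}$ are independent conditional on the starting state $x$. This independence removes the coupling between the regressor $\hat V(\hat x^{(m)})$ and the target inside the squared loss, and the pointwise minimizer collapses to a constant on the support of $\hat x^{(m)}$.

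Concretely, I would abbreviate $Y := \hat V(\hat x^{(m)})$ and $Z := [\mathcal T_{\P^\pi} V_\mathrm{tar}](x^{(m)})$ (the same analysis works when $Z$ is further replaced by a one-step sample $r^{(m)} + \gamma V_\mathrm{tar}(x^{(m+1)})$) and use conditional independence of $Y$ and $Z$ given $x$ to decompose
\begin{equation*}
\mathbb{E}\bigl[(Y - Z)^2 \bigm| x\bigr] = \mathrm{Var}(Y \mid x) + \bigl(\mathbb{E}[Y \mid x] - \mathbb{E}[Z \mid x]\bigr)^2 + \mathrm{Var}(Z \mid x).
\end{equation*}
The last term does not depend on $\hat V$, while the first two are jointly minimized only when $\mathrm{Var}(Y \mid x) = 0$ and $\mathbb{E}[Y \mid x] = \mathbb{E}[Z \mid x]$. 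The minimizer is therefore the constant $c^\star := \mathbb{E}[Z \mid x] = [\mathcal T_{\P^\pi}^{m+1} V_\mathrm{tar}](x)$ on the support of $\hat x^{(m)}$, which in general disagrees with the state-dependent function $[\mathcal T_{\P^\pi} V_\mathrm{tar}]$.

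To finish, I would exhibit an MDP witnessing the gap. Given any non-constant $V_\mathrm{tar}$, one can design a small chain (for example, $x$ branches stochastically to states $y_1, y_2$ which transition deterministically into successors with distinct $V_\mathrm{tar}$ values) so that $[\mathcal T_{\P^\pi} V_\mathrm{tar}]$ takes two different values on the support of $x^{(m)}$. Taking $\mathcal V = \{[\mathcal T_{\P^\pi} V_\mathrm{tar}],\, c^\star\}$ satisfies condition (1) by construction, and by the decomposition above $c^\star$ strictly beats the non-constant $[\mathcal T_{\P^\pi} V_\mathrm{tar}]$ in expected surrogate loss, giving condition (2).

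The main obstacle is presentational rather than analytical: one must clearly flag that $\hat x^{(m)}$ and $x^{(m)}$ are two independent draws from the same kernel, so that their joint law factorizes across the two terms of the squared loss, and that $\mathcal V$ need only contain the appropriate constants on the support. Left implicit, this invites confusion with the double-sampling bias addressed in the previous propositions; the point here is that the bias persists even with a perfect oracle model, and so cannot be neutralized by a variance correction on $\hat V$ analogous to CVAML.
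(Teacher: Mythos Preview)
Your proposal is correct and in fact lands on a cleaner argument than the paper's. Both proofs hinge on the same structural observation you flag explicitly: with $\hat p = \P^\pi$, the model draw $\hat x^{(m)}$ and the environment draw $x^{(m)}$ are conditionally i.i.d.\ given $x$, so the regressor input and the bootstrap target decouple. From there the two routes diverge. The paper adds and subtracts $[\mathcal{T}_{\P^\pi}V_\mathrm{tar}](\hat x^{(m)})$, obtaining a Bellman-residual term plus a cross term that reduces to $\mathrm{Cov}\bigl(\hat V(\hat x^{(m)}),\,[\mathcal{T}_{\P^\pi}V_\mathrm{tar}](\hat x^{(m)})\bigr)$; it then invokes a separate lemma showing that the perturbation $\hat V = (1-\varepsilon)\,[\mathcal{T}_{\P^\pi}V_\mathrm{tar}]$ is a descent direction whenever the variance of $[\mathcal{T}_{\P^\pi}V_\mathrm{tar}]$ under $\hat x^{(m)}$ is positive. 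Your bias--variance split $\mathrm{Var}(Y\mid x) + (\mathbb{E}[Y\mid x]-\mathbb{E}[Z\mid x])^2 + \mathrm{Var}(Z\mid x)$ is algebraically equivalent (expanding the paper's residual-plus-covariance yields exactly this), but it bypasses the auxiliary lemma and delivers the \emph{global} minimizer $c^\star = [\mathcal{T}_{\P^\pi}^{m+1}V_\mathrm{tar}](x)$ directly, rather than merely a nearby improving point. Your witness $\mathcal{V}=\{[\mathcal{T}_{\P^\pi}V_\mathrm{tar}],\,c^\star\}$ is correspondingly sharper than the paper's $\{g,\,(1-\varepsilon)g\}$. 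The trade-off is that the paper's covariance form makes the link to the double-sampling pathology more visually explicit, while your version makes the ``collapse to a constant on the support'' phenomenon (and hence the impossibility of a CVAML-style per-state fix) more transparent.
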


The problem in the loss function again depends on the variance of the value function with regard to the model.
Proofs for this section can be found in \autoref{proofs:MuZero}.

To overcome this issue, we can introduce another variance correction term, similar to $\hat{\mathcal{L}}_\mathrm{CVAML}$. 
This retains the advantage that the MuZero loss is used for both model and value updates.
However, as we average over the value prediction from the model, we only guarantee that 
\begin{align}
\mathbb{E}_{\hat{p}}\left[V(\hat{x}^{(m)})\right] \approx \mathbb{E}_{\P^\pi}\Bigl[[\mathcal{T}_{\P^\pi}^b V_\mathrm{tar}](x^{(m)})\Bigr],
\end{align}
not that the values for each state are correct.
However, these are necessary for accurate planning or policy improvement.
Therefore, it is still important to use a loss variant with $m=0$ for learning accurate values for each state.

\begin{figure*}[t]
    \centering
    \includegraphics[width=.85\linewidth]{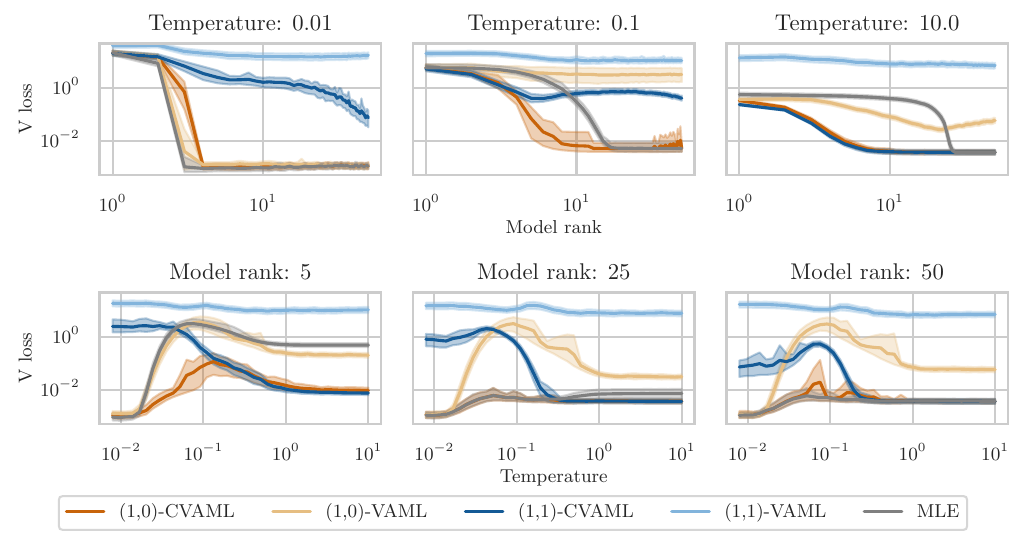}
    \caption{Results for the Garnet experiments. 
    In the top row, we show the mean squared error of the value function prediction using different latent sizes $k$, over three different temperatures. 
    In the bottom row, we vary the temperature and show results for three values of $k$. 
    Shaded regions are bootstrapped confidence intervals of the mean at 95\% over 1000 independent problems. 
    With the exception of deterministic problems (left), the CVAML loss reliably results in the lowest value prediction error. }
    \label{fig:garnet}
\end{figure*}
\subsection{Discussion of theoretical results}
While the problem resulting from the use of stochastic models is resolvable, the MuZero loss is still insufficient for learning \emph{per state} value function with stochastic models.
We observe issues with learning accurate value functions with the $(m\geq1,b\geq1)$-VAML loss even after correctly calibrating it.
If the calibrated loss is used to update the model, and a standard model-free or model-based value estimate is used as the value function learning target, the correct value function can be learned.

\begin{boxinsight}[Calibrated losses]
    To obtain a calibrated loss, we propose using $\hat{\mathcal{L}}_\mathrm{CVAML}$ to update the model and a regular model-based or model-free TD loss to update the value function.
\end{boxinsight}

\section{Calibration impact on finite state MDPs}
\label{sec:empirical1}

To test our findings, we run the $(m,b)$-VAML losses on small, finite state Garnet problems \citep{bhatnagar2007incremental}.
We use the MuZero-style $(1,1)$-VAML loss to update both model and value function, while the IterVAML-style $(1,0)$-VAML loss is only used to train the model, and we use a regular model-based TD loss to update the value function.
For the baseline, we similarly use a model-based TD loss together with a KL-based loss.

We use $n$ to denote the size of the state space, $k$ for the number of successor states in the garnet, and $j$ for the rank of the model.
Every problem is generated by sampling $k$ successor states for each state $x_i$, and parameterizing the transitions with weights $\omega_{i,j}$ sampled iid from a standard normal distribution for all successor states $x_j$.
The transition distribution is given by $p(x'_i|x_l) = {e^{\omega_{i,l}/\tau}}/{\sum_i e^{\omega_{i,l}/\tau}}$.
Varying the temperature $\tau$ we can interpolate between a deterministic transition and a uniform one.

The models are parameterized with two learnable matrices $\varphi \in \mathbb{R}^{j\times n}$ and $\psi \in \mathbb{R}^{j\times n}$ so that $\hat{p}_k(x_i| x_l)  = \mathrm{softmax}(\hat{\omega}_{i,l}) = \mathrm{softmax}(\varphi_i^\top \psi_l)$.
By varying $j$ we create a low-rank constraint.
As \citet{vaml} shows, $(m,b)$-VAML should be used when the model has insufficient capacity to represent the environment.
As a baseline, we use the model $\hat{p}_{\mathrm{KL}, k}\in \argmin_{\hat{p}} \mathrm{KL}(p||\hat{p})$, which we approximate with gradient descent.
The matrices $\varphi$ and $\psi$ are initialized by drawing weights randomly from a normal distribution with a very small standard deviation.

We focus solely on value estimation in the Garnet MDPs with a fixed policy to simplify the experimental setup.
We also use the ground truth reward function.

\textbf{Results:}
The numerical results are graphed in \autoref{fig:garnet}.
For the near-deterministic ground truth environment, we see no benefit from using a calibrated $(m,b)$-VAML.
We find that the algorithms are able to exploit the non-linear softmax function to achieve very accurate models in deterministic environments.
However, even small amounts of stochasticity prevent this solution.

As the stochasticity increases, we see an advantage for calibrated losses.
$(1,1)$-VAML is not able to achieve good results even in a deterministic environment.
As we initialize the models with high entropy, $(1,1)$-VAML is unable to learn a correct value function.
This corroborates the theoretical finding that the error depends on the model's stochasticity and not on the environment.

Calibrated $(1,1)$-VAML still struggles in environments with low stochasticity.
We find that the model is prone to get stuck in local plateaus as it reduces the value function difference per state faster than the model predictive variance.
This suggests that a $(m\geq1,0)$-CVAML loss is preferable with stochastic environments.

\section{Stochasticity and auxiliary models}
\label{sec:theory_2}

All previous results hold independent of the model architecture.
However, in practice, most implementations of decision-aware models use deterministic latent model structures \cite{schrittwieser2020mastering,ye2021mastering,hansen2022temporal,antonoglou2022planning}.

In general, deterministic function approximations cannot capture the full transition distribution in stochastic environments.
However, it is an open question whether a deterministic \emph{latent} model can be sufficient for learning a value-equivalent model, as conjectured by \citet{oh2017value}.
We settle this question now.

\subsection{Deterministic models for stochastic environments}

Showing the existence of a deterministic value-aware model relies on the continuity of the transition kernel and involved functions $\varphi$ and $V$.
These are standard assumptions that are necessary to prove the existence and measurability of standard functions such as the value function \citep{bertsekasshreve1978}.

\begin{restatable}{proposition}{DeterministicModel}\label{prop:3_1}
    Let $\mathcal{X}$ be a compact, connected, metrizable space. Let $p$ be a continuous kernel from $\mathcal{X}$ to probability measures over $\mathcal{X}$. Let $\mathcal{Z}$ be a metrizable space. Consider a bijective latent mapping $\phi: \mathcal{X} \rightarrow \mathcal{Z}$ and any $V: \mathcal{Z} \rightarrow \mathbb{R}$. Assume that they are both continuous. Denote $V_\mathcal{X} = V \circ \phi$.
    
    Then there exists a measurable function $f^*: \mathcal{Z} \rightarrow \mathcal{Z}$ such that we have $V(f^*(\phi(x))) = \EEX{p}{V_\mathcal{X}(x^{(1)})}$ for all $x \in \mathcal{X}$.

    Furthermore, the same $f^*$ is a minimizer of the expected IterVAML loss over any distribution $x \sim \rho$
    $$f^* \in \argmin_{\hat{f}} \EEX{\rho,\P^\pi}{\mathcal{L}_{\mathrm{IterVAML},1}(\hat{f},V_\mathcal{X}|V_\mathcal{X},x)}.$$
\end{restatable}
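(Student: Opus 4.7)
The plan is to reduce the claim to a measurable selection problem. The identity $V(f^\ast(\phi(x))) = \mathbb{E}_p[V_\mathcal{X}(x^{(1)})]$ only constrains $f^\ast$ through the composition with $V$, so the strategy is not to reconstruct the full next-state distribution, but merely to find a single point in latent space whose value matches the expected next-state value. This is possible because the range of $V_\mathcal{X}$ is rich enough to hit every such expectation.

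First I would establish the range. Since $\mathcal{X}$ is compact and connected and $V_\mathcal{X} = V\circ\phi$ is continuous, the image $V_\mathcal{X}(\mathcal{X}) \subseteq \mathbb{R}$ is compact and connected, hence a closed interval $[a,b]$. Next define $g: \mathcal{X} \to \mathbb{R}$ by $g(x) = \mathbb{E}_p[V_\mathcal{X}(x^{(1)})]$; since the integrand lies in $[a,b]$, so does $g(x)$. (Continuity of $p$ and $V_\mathcal{X}$ plus compactness give continuity of $g$, although only measurability is needed below.)

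The heart of the argument is producing a measurable right inverse $h:[a,b]\to\mathcal{X}$ of $V_\mathcal{X}$, i.e.\ $V_\mathcal{X}(h(v)) = v$ for every $v\in[a,b]$. The set-valued map $v\mapsto V_\mathcal{X}^{-1}(\{v\})$ takes nonempty closed values in the Polish space $\mathcal{X}$ and has closed graph (by continuity of $V_\mathcal{X}$), so the Kuratowski--Ryll-Nardzewski measurable selection theorem supplies such an $h$. Since $\phi$ is a continuous bijection from a compact space to a Hausdorff space, it is a homeomorphism, so $\phi^{-1}$ is continuous. Setting $f^\ast = \phi \circ h \circ g \circ \phi^{-1}$ gives a measurable map $\mathcal{Z}\to\mathcal{Z}$, and a direct computation yields
\[
V(f^\ast(\phi(x))) = V_\mathcal{X}(h(g(x))) = g(x) = \mathbb{E}_p[V_\mathcal{X}(x^{(1)})],
\]
as required. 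Finally, for any deterministic candidate $\hat f$ the $(1,0)$-VAML loss reduces to $|V(\hat f(\phi(x))) - g(x)|^2$, which is pointwise zero at $f^\ast$; hence $f^\ast$ minimizes the expected loss under any $\rho$, as claimed, by the same argument as in Proposition~\ref{prop:1_1}.

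The main obstacle is the measurable-selection step: one needs $h$ to exist without imposing injectivity of $V_\mathcal{X}$, since in general many states share the same value. The intermediate-value argument (connectedness plus continuity) guarantees nonemptiness of $V_\mathcal{X}^{-1}(\{v\})$, and closedness of these fibers together with the Polish structure of $\mathcal{X}$ lets the Kuratowski--Ryll-Nardzewski theorem apply. If the authors prefer to avoid invoking this theorem, an alternative is to use the compactness of fibers to define $h(v)$ as a pointwise limit along a countable dense sequence in $[a,b]$, but the selection-theorem route is cleaner. The bijectivity and continuity of $\phi$, and the continuity of $V$, are used only to translate the construction between $\mathcal{X}$ and $\mathcal{Z}$ and to guarantee that the interval $[a,b]$ is the correct range.
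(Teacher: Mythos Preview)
Your argument is correct and structurally parallel to the paper's: both observe that $\phi$ is a homeomorphism, use connectedness and the intermediate value theorem to show every expected next-state value lies in the range of $V_\mathcal{X}$, and then invoke a measurable selection theorem. The difference is in the selection step. The paper works on $\mathcal{Z}_\mathcal{X}\times\mathcal{Z}_\mathcal{X}$, defines the continuous function $h(z,z')=|\theta_{V,\mathcal{Z}}(z)-V(z')|^2$ with $\theta_{V,\mathcal{Z}}(z)=\mathbb{E}_p[V_\mathcal{X}(x')\mid\phi(x)=z]$, and applies Bertsekas--Shreve Proposition~7.33 (measurable selection of minimizers of a lower-semicontinuous function over compact sections) to obtain $f^\ast$ in one shot. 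You instead factor through the interval $[a,b]$: build a measurable right inverse $h$ of $V_\mathcal{X}$ via Kuratowski--Ryll-Nardzewski and set $f^\ast=\phi\circ h\circ g\circ\phi^{-1}$. Your factorization makes the role of the one-dimensional value range explicit and produces a reusable right inverse; the paper's approach is a single self-contained invocation of a textbook result. For the final minimization clause, you note that for a deterministic model the expectation-based IterVAML loss is literally $|V(\hat f(\phi(x)))-g(x)|^2$ and hence pointwise zero at $f^\ast$, while the paper expands the sample-based loss and cancels a cross term to reach the same conclusion. One small point worth tightening (present in both arguments): $f^\ast$ is only constructed on $\phi(\mathcal{X})\subseteq\mathcal{Z}$, so to get a map $\mathcal{Z}\to\mathcal{Z}$ one should extend it, say constantly, off this closed subset.
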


Proofs for this section can be found in \autoref{proofs:deterministic}.

We can conclude that given a sufficiently flexible function class $\mathcal{F}$, $(1,b)$-VAML can recover an optimal deterministic model for value function prediction.
Note that our conditions solely ensure the \emph{existence} of a measurable function; the learning problem might still be very challenging.

Note that the conditions for our proof here are slightly different than those for our definition of an \emph{uncalibrated} loss.
Here we show that given a flexible enough function class, a deterministic model can sufficiently minimize the IterVAML loss.
In the previous section, we showed that if the function class admits a perfect stochastic model, but not a perfect deterministic one, the $(m,b)$-VAML loss will incur a calibration error.
Here we show the conditions under which a perfect deterministic model exists.

Therefore, it is still important to have a calibrated surrogate loss for stochastic models, as many use cases make stochastic models attractive.
For example, a pre-trained LLM backbone might be used, which is naturally stochastic due to the sampling strategies used to query it.

\subsection{Auxiliary losses for latent-space models}

While the original MuZero algorithm only uses the MuZero loss to update the latent embedding, dynamics model, and value function estimation, several more recent works add auxiliary stabilizing losses \citep{ye2021mastering,hafner2021mastering,hansen2024tdmpc,voelcker2024mad}.
These allow the model to learn meaningful transitions even before the value function is properly approximated, which helps e.g. in sparse-reward environments.

Most prior works use some form of Bootstrap-your-own-latent (BYOL) \citep{grill2020bootstrap} loss, which minimizes the difference between the next states encoding $\varphi(x^{(m)})$ and the model prediction $f^m(x^{(0)})$.
In the following we write $(\hat{f},\varphi)=\hat{p}$ when we want to highlight the different components of the learned model.

In the case of linear features, models, and value functions, recent works have shown that such a loss can greatly aid in learning meaningful features for value function prediction \citep{lyle2021understanding,tang2022understanding,ni2024bridging,voelcker2024when}.
When the difference is computed with an $L_2$, the auxiliary loss is
\begin{align}
    & \hat{\mathcal{L}}^{\P^\pi}_{\mathrm{aux}, m}\left((\hat{f},
\varphi)|x\right) = \biggl\| \hat{f}^m(\varphi(x)) - \bigl[\varphi(x^{(m)}) \bigr]_\mathrm{sg} \biggr\|^2.
\end{align}
The introduction of a stabilizing loss poses a challenge for the result in \autoref{prop:3_1}.
For a flexible enough model and embedding function class, the perfect model $f^*_\mathrm{aux}$ would predict $\EEX{\P^\pi}{\varphi(x')| x}$.
However, $f^*_\mathrm{aux}$ only coincides with the optimal model under the VAML loss if
\begin{align}
    \EEX{\P^\pi}{\hat{V}(\varphi(x^{(1)}))} = \hat{V}\biggl(\EEX{\P^\pi}{\varphi(x^{(1)})}\biggr). \label{eq:aux}
\end{align}
This is the case if and only if the value function is affine in the embedding features $\varphi$, which is also referred to as a linear expectation model \cite{wan2019planning}.
However,  learning an embedding in which the value function is linear can be difficult and may not lead to stable model predictions in complex, high-dimensional environments.

\begin{figure*}[t]
\centering
   \includegraphics[width=0.5\textwidth]{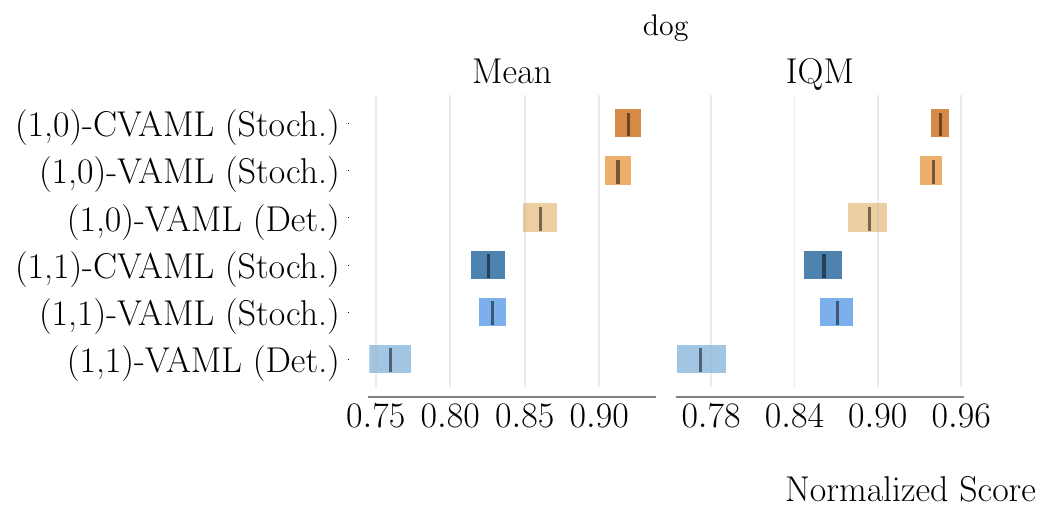} 
   \includegraphics[width=0.33\textwidth]{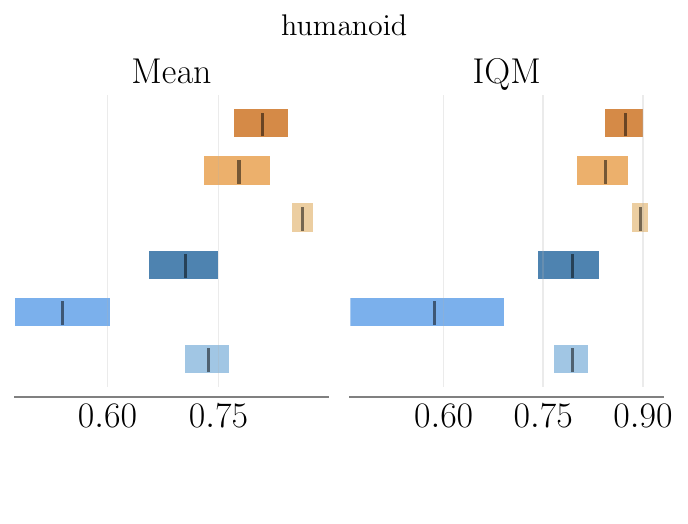} 
   \caption{
   Results for the latent-space model experiments. We show the aggregate metrics of the final performance on dog (left) and humanoid (right) environments, following \cite{agarwal2021deep}. Calibrating $(1,1)$-VAML leads to a clear improvement in performance in the humanoid environment, while for $(1,0)$-(C)VAML the difference is less noticeable. This is consistent with our theoretical findings as learning a smaller variance model can be sufficient, but wrong value functions will be learned with a stochastic model when $b\geq1$.}
   \label{fig:agg_results}
\end{figure*}

An exception to the use of deterministic models is the architecture proposed by \citet{antonoglou2022planning}.
However, their model and auxiliary loss rely on a biased straight-through gradient estimation.
This introduces complications for finding the optima of the loss function and model class.

\section{Experiments with latent-space models}
\label{sec:empirical2}

We examine the impact of the calibrated losses on a subset of DMC environments \cite{tunyasuvunakool2020dmcontrol} encompassing 7 total tasks across the \textit{humanoid} and \textit{dog} domains.
These two domains are the most challenging in the DMC suite, and the standard comparison for current methods \cite{voelcker2024mad,nauman2024bigger}.

\textbf{Architecture and Setup:} We present a comparison between $(m,b)$-CVAML and $(m,b)$-VAML families of losses.
To conduct our experiments we use two neural network architectures for the model, a stochastic and a deterministic latent model, while keeping the rest of the training setup consistent for a clean comparison.
All experiments are conducted with latent model architectures composed of an encoder, a latent dynamics model, and a value and policy function head.

For the stochastic case, the latent models are multivariate, diagonal Gaussian distributions where mean and variance are parameterized by the latent network \citep{pets,mbpo,paster2021blast}.
In the deterministic case we simply use a feed-forward network.
 
Model rollouts are produced by sequentially sampling latent states from the model conditional on the initial state and an action sequence using the reparametrization trick. 
\begin{align}
&\hat{p}(\hat{z}'|z,a)=\hat{\mu}(z,a) + \hat{\sigma}(z,a)\cdot\varepsilon,\; \varepsilon \sim \mathcal{N}(0,I)
\end{align}

The auxiliary loss for stochastic models is computed as a negative log-likelihood of the next states' latent representations under the current dynamics model.
\begin{align}
    &\mathcal{L}_{\mathrm{aux},m}\left(\hat{p},\varphi\Big|x,a^{(0:m-1)},x^{(m)}\right) \nonumber\\
    &\quad =-\log \hat{p}^m\left(\varphi(x^{(m)})\Big|\varphi(x),a^{(0:m-1)}\right),
\end{align}
where $a^{(0:m-1)}$ is a sequence of actions of length $m$ starting from state $x^{(0)}$.
In the case of deterministic models, the auxiliary loss is simply the MSE version introduced in \autoref{eq:aux}.

The full model loss used is
\begin{align}
    & \hat{\mathcal{L}}^{\P^\pi}_{\mathrm{model}, m}\left((\hat{f},
\varphi), \hat{V}|x\right) \nonumber\\
    & ~ = \underbrace{\hat{\mathcal{L}}^{\P^\pi}_{m,b}\left((\hat{f},\varphi), \hat{V}|x\right)}_{(m,b)\mathrm{-VAML}} + \underbrace{\hat{\mathcal{L}}^{\P^\pi}_{\mathrm{aux}, m}\left((\hat{f},
\varphi)|x\right)}_\mathrm{auxiliary},
\end{align}
replacing $\hat{f}$ with $\hat{p}$ for the stochastic model version.

For training the policy and value function, we follow the data-mixing protocol suggested by \citet{voelcker2024mad}.
We use both model-generated on-policy and real environment data from the replay buffer to train a value and policy head using the $(m,b)$-VAML loss with a twinned critic parametrization similar to TD3  \citep{fujimoto2018addressing}.
At every timestep, half of the next states in the minibatch are replaced with model-simulated states resulting from on-policy actions, the other half ground-truth environment data.
This means the model influences the value function both through the encoder and through on-policy model-generated data.

The actions for environment interactions are obtained from the actor and the environment model using MPC \citep{hansen2022temporal}: initialized at the action produced by the actor for the current state, this algorithm iteratively refines the action to maximize the expected return.
In total our setup uses the model for training the shared encoder, generating data for value and policy improvement, and for online model-based search.

For $(m,b)$-CVAML, the calibration term is computed by sampling multiple trajectories from the model and calculating the means and variances of next-state values produced by the critic across the different samples.
Additional implementation details are provided in \autoref{app:model_design}.

\textbf{Results:} We plot aggregated performance over 20 random seeds with 95\% CI,  estimated with stratified percentile bootstrap \citep{patterson2024empirical}.
Aggregations of final performance over several environments are visualized using the RLiable library \citep{agarwal2021deep}.

The difference in performance is most noticeable between the $(1,1)$-VAML and -CVAML losses on the humanoid benchmark, as the uncalibrated loss impacts both model and value function learning in this case. 
This effect is less prominent but a trend is still visible with $(1,0)$-CVAML, where only the model learning is affected.

In the humanoid domain, we observe a performance improvement when using the calibration with the MuZero-style loss.
The dog domain is less affected by the difference in the calibration.

In addition to the calibration effect, we observe that probabilistic models outperform deterministic ones in the dog domain, even though the simulator is deterministic.
This is in line with previous work \citep{pets,mbpo}, as stochastic models can reduce the tendency of the critic to exploit model errors.
However, this advantage seems to be domain-specific and is less pronounced in the humanoid suite, where deterministic models with a $(1,0)$-VAML loss perform best overall.

Finally, we observe an advantage of $(1,0)$-updates over the $(1,1)$-updates.
While previous work claimed that IterVAML is unstable \citep{lovatto2020decision,voelcker2022value}, comparisons were not made with SOTA model architectures and auxiliary tasks.
As additional experiments (see \autoref{fig:abl2}) show, the IterVAML loss is stable even without additional auxiliary losses, although adding the BYOL loss is necessary to achieve non-trivial performance in humanoid tasks.

\begin{boxinsight}[Remarks for practitioners]
While deterministic models are theoretically sufficient for learning value-equivalent models, we observe benefits in some benchmarks from the use of stochastic models.
Using calibrated losses is empirically especially important for MuZero-style model and value updates.
\end{boxinsight}

\begin{figure*}[t]
    \centering
    \includegraphics[width=0.8\textwidth]{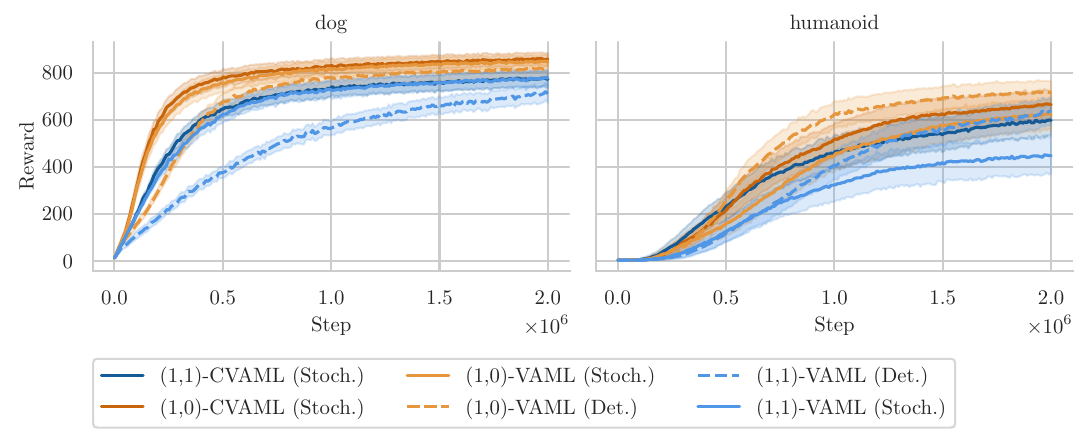} 
    \caption{
    Sample efficiency curve for both dog (left) and humanoid (right). Per-task normalized return is aggregated over 30 seeds per environment and 3 tasks for the humanoid domain, and 4 -- for the dog domain, with 95\% bootstrapped confidence intervals shaded. 
    In addition to a higher final return, we observe significantly earlier learning for the $(1,0)$-(C)VAML on the dog task.}
   \label{fig:dmc_reward}
\end{figure*}

\section{Related work}

\textbf{VAML and MuZero:}~ \citet{itervaml} established IterVAML based on earlier work~\citep{vaml}.
Several extensions have been proposed, such as a VAML-regularized MSE loss~\citep{voelcker2022value} and a policy-gradient aware loss \citep{abachi2020policy}.
Combining IterVAML with latent spaces was first explored by \citet{abachi2022viper}, but no experimental results were provided.
MuZero~\citep{schrittwieser2020mastering,ye2021mastering} is built based on earlier works that introduce the ideas of learning a latent model jointly with the value function~\citep{silver2017predictron,oh2017value}.
However, none of these works investigate the calibration of the loss function.
\citet{antonoglou2022planning} propose an extension to MuZero in stochastic environments but focus on the model architecture, not the value function loss.
\citet{hansen2022temporal} and \citet{hansen2024tdmpc} adapted the MuZero loss to continuous control environments but did not extend their formulation to stochastic models.
\citet{grimm2020value} and \citet{grimm2021proper} consider how the set of value equivalent models relates to value functions. 
They are the first to show the close connection between the notions of value-awareness and MuZero.

\textbf{Other decision-aware algorithms:}~ Several other works propose decision-aware model learning algorithms that do not directly minimize a value function difference.
\citet{doro2020gradient} weigh the samples used for model learning by their impact on the policy gradient.
\citet{nikishin2021control} uses implicit differentiation to obtain a loss for the model function with regard to the policy performance measure. 
To achieve the same goal, \citet{eysenbach2022mismatched} and \citet{ghugare2023simplifying} choose a variational formulation.
\citet{Modhe2021ModelAdvantageOF} proposes to compute the advantage function resulting from different models instead of using the value function.
\citet{ayoub2020model} presents an algorithm for selecting models based on their ability to predict value function estimates and provide regret bounds with this algorithm.

\textbf{Learning with suboptimal models:}~ Several works have focused on the broader goal of using models with errors without addressing the loss functions of the model.
Among these, some attempt to correct models using information obtained during exploration~\citep{joseph2013reinforcement,talvitie2017self,modi2020sample,rakhsha2022operator,rakhsha2024maximum}, or to limit interaction with wrong models~\citep{buckman2018sample,mbpo,pmlr-v119-abbas20a}.
Several of these techniques can be applied together with $(m,b)$-CVAML to improve the model and value function learning further.
Finally, we do not focus on exploration, but \citet{guo2022byolexplore} show how auxiliary losses can be used not only to stabilize learning but also to improve exploration.

\section{Conclusions}

We theoretically analyze commonly used value-aware losses such as the MuZero and IterVAML loss and show that they are \emph{uncalibrated} surrogate losses.
When using $(m,b)$-VAML losses, such as the popular IterVAML and MuZero algorithms, with stochastic environment models, the loss learns low variance models, even if those do not recover the correct value function.
Building on our proofs, we propose a novel variant of the loss to stabilize learning with stochastic environments and evaluate its efficacy in practice.

Our experiments further show that the calibration of the $(m,b)$-VAML losses is important for obtaining strong learning with stochastic environment models.
In addition, while previous papers showed that IterVAML losses can be unstable in practice \citep{lovatto2020decision,voelcker2022value}, we find that this can be overcome by adopting the latent model architecture used by \citet{schrittwieser2020mastering} and the auxiliary losses established by \citet{li2023efficient,hansen2022temporal}.
When combined with a suitable value learning procedure, IterVAML performs on par with or better than MuZero in continuous control tasks.
Finally, while $(m,b)$-VAML losses have mostly been used with deterministic environments, our work enables the community to use stochastic models with a calibrated loss and shows the potential merits of this approach in a number of environments.

\section*{Acknowledgments}
We would like to thank the members of the Adaptive Agents Lab and the Toronto Intelligent Systems Lab who provided feedback on the ideas, the draft of this paper, and provided additional resources for running all experiments.
We acknowledge Amin Rakhsha for helping with the formal statement of Lemma 4.
We would also like to thank the anonymous reviewers for providing valuable feedback to improve this work.
AMF acknowledges the funding from the Natural Sciences and Engineering Research Council of Canada (NSERC) through the Discovery Grant program (2021-03701).
Resources used in preparing this research were provided, in part, by the Province of Ontario, the Government of Canada through CIFAR, and companies sponsoring the Vector Institute.

\section*{Impact Statement}
Our paper is first and foremost a theoretical analysis of loss function used in reinforcement learning.
As such, it presents work whose goal is to advance the field of Machine Learning.
There are many potential societal consequences of our work, none of which we feel must be specifically highlighted here.

\bibliographystyle{icml2025}
\bibliography{bibliography}

\begin{thebibliography}{64}
\providecommand{\natexlab}[1]{#1}
\providecommand{\url}[1]{\texttt{#1}}
\expandafter\ifx\csname urlstyle\endcsname\relax
  \providecommand{\doi}[1]{doi: #1}\else
  \providecommand{\doi}{doi: \begingroup \urlstyle{rm}\Url}\fi

\bibitem[Abachi et~al.(2020)Abachi, Ghavamzadeh, and Farahmand]{abachi2020policy}
Abachi, R., Ghavamzadeh, M., and Farahmand, A.-m.
\newblock Policy-aware model learning for policy gradient methods.
\newblock \emph{ArXiv}, abs/2003.00030, 2020.

\bibitem[Abachi et~al.(2022)Abachi, Voelcker, Garg, and Farahmand]{abachi2022viper}
Abachi, R., Voelcker, C.~A., Garg, A., and Farahmand, A.-m.
\newblock {VIP}er: Iterative value-aware model learning on the value improvement path.
\newblock In \emph{Decision Awareness in Reinforcement Learning Workshop at ICML 2022}, 2022.

\bibitem[Abbas et~al.(2020)Abbas, Sokota, Talvitie, and White]{pmlr-v119-abbas20a}
Abbas, Z., Sokota, S., Talvitie, E., and White, M.
\newblock Selective dyna-style planning under limited model capacity.
\newblock In \emph{International Conference on Machine Learning}, 2020.

\bibitem[Agarwal et~al.(2021)Agarwal, Schwarzer, Castro, Courville, and Bellemare]{agarwal2021deep}
Agarwal, R., Schwarzer, M., Castro, P.~S., Courville, A., and Bellemare, M.~G.
\newblock Deep reinforcement learning at the edge of the statistical precipice.
\newblock In \emph{Advances in Neural Information Processing Systems}, 2021.

\bibitem[Antonoglou et~al.(2022)Antonoglou, Schrittwieser, Ozair, Hubert, and Silver]{antonoglou2022planning}
Antonoglou, I., Schrittwieser, J., Ozair, S., Hubert, T.~K., and Silver, D.
\newblock Planning in stochastic environments with a learned model.
\newblock In \emph{International Conference on Learning Representations}, 2022.

\bibitem[Antos et~al.(2008)Antos, Szepesv\'ari, and Munos]{antos2008learning}
Antos, A., Szepesv\'ari, {\relax Cs}., and Munos, R.
\newblock Learning near-optimal policies with bellman-residual minimization based fitted policy iteration and a single sample path.
\newblock \emph{Machine Learning}, 71:\penalty0 89--129, 2008.

\bibitem[Ayoub et~al.(2020)Ayoub, Jia, Szepesv\'ari, Wang, and Yang]{ayoub2020model}
Ayoub, A., Jia, Z., Szepesv\'ari, {\relax Cs}., Wang, M., and Yang, L.
\newblock Model-based reinforcement learning with value-targeted regression.
\newblock In \emph{International Conference on Machine Learning}, 2020.

\bibitem[{Bellemare} et~al.(2013){Bellemare}, {Naddaf}, {Veness}, and {Bowling}]{bellemare13arcade}
{Bellemare}, M.~G., {Naddaf}, Y., {Veness}, J., and {Bowling}, M.
\newblock The arcade learning environment: An evaluation platform for general agents.
\newblock \emph{Journal of Artificial Intelligence Research}, 47, 2013.

\bibitem[Bertsekas \& Shreve(1978)Bertsekas and Shreve]{bertsekasshreve1978}
Bertsekas, D.~P. and Shreve, S.~E.
\newblock \emph{Stochastic Optimal Control: The Discrete-Time Case}.
\newblock Academic Press, 1978.

\bibitem[Bhatnagar et~al.(2007)Bhatnagar, Ghavamzadeh, Lee, and Sutton]{bhatnagar2007incremental}
Bhatnagar, S., Ghavamzadeh, M., Lee, M., and Sutton, R.~S.
\newblock Incremental natural actor-critic algorithms.
\newblock \emph{Advances in Neural Information Processing Systems}, 2007.

\bibitem[Buckman et~al.(2018)Buckman, Hafner, Tucker, Brevdo, and Lee]{buckman2018sample}
Buckman, J., Hafner, D., Tucker, G., Brevdo, E., and Lee, H.
\newblock Sample-efficient reinforcement learning with stochastic ensemble value expansion.
\newblock \emph{Advances in neural information processing systems}, 2018.

\bibitem[Chua et~al.(2018)Chua, Calandra, McAllister, and Levine]{pets}
Chua, K., Calandra, R., McAllister, R., and Levine, S.
\newblock Deep reinforcement learning in a handful of trials using probabilistic dynamics models.
\newblock In \emph{Advances in Neural Information Processing Systems}, 2018.

\bibitem[Deisenroth \& Rasmussen(2011)Deisenroth and Rasmussen]{deisenroth2011pilco}
Deisenroth, M. and Rasmussen, C.~E.
\newblock Pilco: A model-based and data-efficient approach to policy search.
\newblock In \emph{International Conference on Machine Learning}, 2011.

\bibitem[D'Oro et~al.(2020)D'Oro, Metelli, Tirinzoni, Papini, and Restelli]{doro2020gradient}
D'Oro, P., Metelli, A.~M., Tirinzoni, A., Papini, M., and Restelli, M.
\newblock Gradient-aware model-based policy search.
\newblock In \emph{AAAI Conference on Artificial Intelligence}, 2020.

\bibitem[Eysenbach et~al.(2022)Eysenbach, Khazatsky, Levine, and Salakhutdinov]{eysenbach2022mismatched}
Eysenbach, B., Khazatsky, A., Levine, S., and Salakhutdinov, R.~R.
\newblock Mismatched no more: Joint model-policy optimization for model-based rl.
\newblock In \emph{Advances in Neural Information Processing Systems}, 2022.

\bibitem[Farahmand(2018)]{itervaml}
Farahmand, A.-m.
\newblock Iterative value-aware model learning.
\newblock In \emph{Advances in Neural Information Processing Systems}, 2018.

\bibitem[Farahmand et~al.(2017)Farahmand, Barreto, and Nikovski]{vaml}
Farahmand, A.-m., Barreto, A., and Nikovski, D.
\newblock {Value-Aware Loss Function for Model-based Reinforcement Learning}.
\newblock In \emph{International Conference on Artificial Intelligence and Statistics}, 2017.

\bibitem[Fujimoto et~al.(2018{\natexlab{a}})Fujimoto, Hoof, and Meger]{td3}
Fujimoto, S., Hoof, H., and Meger, D.
\newblock Addressing function approximation error in actor-critic methods.
\newblock In \emph{International Conference on Machine Learning}, 2018{\natexlab{a}}.

\bibitem[Fujimoto et~al.(2018{\natexlab{b}})Fujimoto, van Hoof, and Meger]{fujimoto2018addressing}
Fujimoto, S., van Hoof, H., and Meger, D.
\newblock Addressing function approximation error in actor-critic methods.
\newblock In \emph{International Conference on Machine Learning}, 2018{\natexlab{b}}.

\bibitem[Fujimoto et~al.(2025)Fujimoto, D'Oro, Zhang, Tian, and Rabbat]{fujimoto2025towards}
Fujimoto, S., D'Oro, P., Zhang, A., Tian, Y., and Rabbat, M.
\newblock Towards general-purpose model-free reinforcement learning.
\newblock In \emph{The Thirteenth International Conference on Learning Representations}, 2025.

\bibitem[Ghugare et~al.(2023)Ghugare, Bharadhwaj, Eysenbach, Levine, and Salakhutdinov]{ghugare2023simplifying}
Ghugare, R., Bharadhwaj, H., Eysenbach, B., Levine, S., and Salakhutdinov, R.
\newblock Simplifying model-based {RL}: Learning representations, latent-space models, and policies with one objective.
\newblock In \emph{International Conference on Learning Representations}, 2023.

\bibitem[Grill et~al.(2020)Grill, Strub, Altch{\'e}, Tallec, Richemond, Buchatskaya, Doersch, Avila~Pires, Guo, Gheshlaghi~Azar, et~al.]{grill2020bootstrap}
Grill, J.-B., Strub, F., Altch{\'e}, F., Tallec, C., Richemond, P., Buchatskaya, E., Doersch, C., Avila~Pires, B., Guo, Z., Gheshlaghi~Azar, M., et~al.
\newblock Bootstrap your own latent-a new approach to self-supervised learning.
\newblock In \emph{Advances in neural information processing systems}, 2020.

\bibitem[Grimm et~al.(2020)Grimm, Barreto, Singh, and Silver]{grimm2020value}
Grimm, C., Barreto, A., Singh, S., and Silver, D.
\newblock The value equivalence principle for model-based reinforcement learning.
\newblock In \emph{Advances in Neural Information Processing Systems}, 2020.

\bibitem[Grimm et~al.(2021)Grimm, Barreto, Farquhar, Silver, and Singh]{grimm2021proper}
Grimm, C., Barreto, A., Farquhar, G., Silver, D., and Singh, S.
\newblock Proper value equivalence.
\newblock In \emph{Advances in Neural Information Processing Systems}, 2021.

\bibitem[Guo et~al.(2022)Guo, Thakoor, Pislar, Pires, Altch{\'e}, Tallec, Saade, Calandriello, Grill, Tang, Valko, Munos, Azar, and Piot]{guo2022byolexplore}
Guo, Z.~D., Thakoor, S., Pislar, M., Pires, B.~A., Altch{\'e}, F., Tallec, C., Saade, A., Calandriello, D., Grill, J.-B., Tang, Y., Valko, M., Munos, R., Azar, M.~G., and Piot, B.
\newblock {BYOL}-explore: Exploration by bootstrapped prediction.
\newblock In \emph{Advances in Neural Information Processing Systems}, 2022.

\bibitem[Gy{\"o}rfi et~al.(2002)Gy{\"o}rfi, Kohler, Krzyżak, and Walk]{gyrfi2002adt}
Gy{\"o}rfi, L., Kohler, M., Krzyżak, A., and Walk, H.
\newblock A distribution-free theory of nonparametric regression.
\newblock In \emph{Springer Series in Statistics}, 2002.

\bibitem[Hafner et~al.(2020)Hafner, Lillicrap, Ba, and Norouzi]{Hafner2020Dream}
Hafner, D., Lillicrap, T., Ba, J., and Norouzi, M.
\newblock Dream to control: Learning behaviors by latent imagination.
\newblock In \emph{International Conference on Learning Representations}, 2020.

\bibitem[Hafner et~al.(2021)Hafner, Lillicrap, Norouzi, and Ba]{hafner2021mastering}
Hafner, D., Lillicrap, T.~P., Norouzi, M., and Ba, J.
\newblock Mastering atari with discrete world models.
\newblock In \emph{International Conference on Learning Representations}, 2021.

\bibitem[Hansen et~al.(2022)Hansen, Su, and Wang]{hansen2022temporal}
Hansen, N., Su, H., and Wang, X.
\newblock Temporal difference learning for model predictive control.
\newblock In \emph{International Conference on Machine Learning}, 2022.

\bibitem[Hansen et~al.(2024)Hansen, Su, and Wang]{hansen2024tdmpc}
Hansen, N., Su, H., and Wang, X.
\newblock {TD}-{MPC}2: Scalable, robust world models for continuous control.
\newblock In \emph{The Twelfth International Conference on Learning Representations}, 2024.

\bibitem[Janner et~al.(2019)Janner, Fu, Zhang, and Levine]{mbpo}
Janner, M., Fu, J., Zhang, M., and Levine, S.
\newblock When to trust your model: Model-based policy optimization.
\newblock In \emph{Advances in Neural Information Processing Systems}, 2019.

\bibitem[Joseph et~al.(2013)Joseph, Geramifard, Roberts, How, and Roy]{joseph2013reinforcement}
Joseph, J., Geramifard, A., Roberts, J.~W., How, J.~P., and Roy, N.
\newblock Reinforcement learning with misspecified model classes.
\newblock In \emph{IEEE International Conference on Robotics and Automation}, 2013.

\bibitem[Kearns \& Singh(2002)Kearns and Singh]{kearns2002near}
Kearns, M. and Singh, S.
\newblock Near-optimal reinforcement learning in polynomial time.
\newblock \emph{Machine learning}, 49\penalty0 (2), 2002.

\bibitem[Lambert et~al.(2020)Lambert, Amos, Yadan, and Calandra]{lambert202objective}
Lambert, N., Amos, B., Yadan, O., and Calandra, R.
\newblock Objective mismatch in model-based reinforcement learning.
\newblock In \emph{Conference on Learning for Dynamics and Control}, 2020.

\bibitem[Li et~al.(2023)Li, Kumar, Kostrikov, and Levine]{li2023efficient}
Li, Q., Kumar, A., Kostrikov, I., and Levine, S.
\newblock Efficient deep reinforcement learning requires regulating overfitting.
\newblock In \emph{International Conference on Learning Representations}, 2023.

\bibitem[Lovatto et~al.(2020)Lovatto, Bueno, Mau\'{a}, and de~Barros]{lovatto2020decision}
Lovatto, A.~G., Bueno, T.~P., Mau\'{a}, D.~D., and de~Barros, L.~N.
\newblock Decision-aware model learning for actor-critic methods: When theory does not meet practice.
\newblock In \emph{"I Can't Believe It's Not Better!" at NeurIPS Workshops}, 2020.

\bibitem[Lyle et~al.(2021)Lyle, Rowland, and Dabney]{lyle2021understanding}
Lyle, C., Rowland, M., and Dabney, W.
\newblock Understanding and preventing capacity loss in reinforcement learning.
\newblock In \emph{International Conference on Learning Representations}, 2021.

\bibitem[Mnih et~al.(2013)Mnih, Kavukcuoglu, Silver, Graves, Antonoglou, Wierstra, and Riedmiller]{mnih2013playing}
Mnih, V., Kavukcuoglu, K., Silver, D., Graves, A., Antonoglou, I., Wierstra, D., and Riedmiller, M.
\newblock Playing atari with deep reinforcement learning.
\newblock In \emph{NeurIPS Deep Learning Workshop}. 2013.

\bibitem[Modhe et~al.(2021)Modhe, Kamath, Batra, and Kalyan]{Modhe2021ModelAdvantageOF}
Modhe, N., Kamath, H., Batra, D., and Kalyan, A.
\newblock Model-advantage optimization for model-based reinforcement learning.
\newblock \emph{ArXiv}, abs/2106.14080, 2021.

\bibitem[Modi et~al.(2020)Modi, Jiang, Tewari, and Singh]{modi2020sample}
Modi, A., Jiang, N., Tewari, A., and Singh, S.
\newblock Sample complexity of reinforcement learning using linearly combined model ensembles.
\newblock In \emph{International Conference on Artificial Intelligence and Statistics}, 2020.

\bibitem[Moerland et~al.(2023)Moerland, Broekens, Plaat, and Jonker]{moerland}
Moerland, T.~M., Broekens, J., Plaat, A., and Jonker, C.~M.
\newblock Model-based reinforcement learning: A survey.
\newblock \emph{Foundations and Trends in Machine Learning}, 16\penalty0 (1), 2023.

\bibitem[Munkres(2018)]{Munkres2018}
Munkres, J.~R.
\newblock \emph{Topology}.
\newblock Pearson Modern Classic, 2nd edition, 2018.

\bibitem[Nauman et~al.(2024)Nauman, Ostaszewski, Jankowski, Mi{\l}o{\'s}, and Cygan]{nauman2024bigger}
Nauman, M., Ostaszewski, M., Jankowski, K., Mi{\l}o{\'s}, P., and Cygan, M.
\newblock Bigger, regularized, optimistic: scaling for compute and sample-efficient continuous control.
\newblock \emph{Advances in Neural Information Processing Systems}, 2024.

\bibitem[Ni et~al.(2024)Ni, Eysenbach, Seyedsalehi, Ma, Gehring, Mahajan, and Bacon]{ni2024bridging}
Ni, T., Eysenbach, B., Seyedsalehi, E., Ma, M., Gehring, C., Mahajan, A., and Bacon, P.-L.
\newblock Bridging state and history representations: Understanding self-predictive rl.
\newblock \emph{To appear in International Conference on Learning Representations}, 2024.

\bibitem[Nikishin et~al.(2021)Nikishin, Abachi, Agarwal, and Bacon]{nikishin2021control}
Nikishin, E., Abachi, R., Agarwal, R., and Bacon, P.-L.
\newblock Control-oriented model-based reinforcement learning with implicit differentiation.
\newblock \emph{ArXiv}, abs/2106.03273, 2021.

\bibitem[Oh et~al.(2017)Oh, Singh, and Lee]{oh2017value}
Oh, J., Singh, S., and Lee, H.
\newblock Value prediction network.
\newblock \emph{Advances in neural information processing systems}, 30, 2017.

\bibitem[Paster et~al.(2021)Paster, McKinney, McIlraith, and Ba]{paster2021blast}
Paster, K., McKinney, L.~E., McIlraith, S.~A., and Ba, J.
\newblock {BLAST}: Latent dynamics models from bootstrapping.
\newblock In \emph{Deep RL Workshop NeurIPS 2021}, 2021.

\bibitem[Patterson et~al.(2024)Patterson, Neumann, White, and White]{patterson2024empirical}
Patterson, A., Neumann, S., White, M., and White, A.
\newblock Empirical design in reinforcement learning.
\newblock \emph{Journal of Machine Learning Research}, 25, 2024.

\bibitem[Puterman(1994)]{Puterman1994MarkovDP}
Puterman, M.~L.
\newblock Markov decision processes: Discrete stochastic dynamic programming.
\newblock In \emph{Wiley Series in Probability and Statistics}, 1994.

\bibitem[Rakhsha et~al.(2022)Rakhsha, Wang, Ghavamzadeh, and Farahmand]{rakhsha2022operator}
Rakhsha, A., Wang, A., Ghavamzadeh, M., and Farahmand, A.-m.
\newblock Operator splitting value iteration.
\newblock In \emph{Advances in Neural Information Processing Systems}, 2022.

\bibitem[Rakhsha et~al.(2024)Rakhsha, Kemertas, Ghavamzadeh, and Farahmand]{rakhsha2024maximum}
Rakhsha, A., Kemertas, M., Ghavamzadeh, M., and Farahmand, A.-m.
\newblock Maximum entropy model correction in reinforcement learning.
\newblock In \emph{International Conference on Learning Representations}, 2024.

\bibitem[Schneider(1997)]{schneider1997exploiting}
Schneider, J.~G.
\newblock Exploiting model uncertainty estimates for safe dynamic control learning.
\newblock In \emph{Advances in Neural Information Processing Systems}, 1997.

\bibitem[Schrittwieser et~al.(2020)Schrittwieser, Antonoglou, Hubert, Simonyan, Sifre, Schmitt, Guez, Lockhart, Hassabis, Graepel, et~al.]{schrittwieser2020mastering}
Schrittwieser, J., Antonoglou, I., Hubert, T., Simonyan, K., Sifre, L., Schmitt, S., Guez, A., Lockhart, E., Hassabis, D., Graepel, T., et~al.
\newblock Mastering atari, go, chess and shogi by planning with a learned model.
\newblock \emph{Nature}, 588\penalty0 (7839), 2020.

\bibitem[Silver et~al.(2017)Silver, van Hasselt, Hessel, Schaul, Guez, Harley, Dulac-Arnold, Reichert, Rabinowitz, Barreto, et~al.]{silver2017predictron}
Silver, D., van Hasselt, H., Hessel, M., Schaul, T., Guez, A., Harley, T., Dulac-Arnold, G., Reichert, D., Rabinowitz, N., Barreto, A., et~al.
\newblock The predictron: end-to-end learning and planning.
\newblock In \emph{International Conference on Machine Learning}, 2017.

\bibitem[Steinwart \& Christmann(2008)Steinwart and Christmann]{steinwart2008support}
Steinwart, I. and Christmann, A.
\newblock \emph{Support vector machines: Information Science and Statistics}.
\newblock Springer New York, 2008.

\bibitem[Sutton(1990)]{dyna}
Sutton, R.~S.
\newblock Integrated architectures for learning, planning, and reacting based on approximating dynamic programming.
\newblock In \emph{Machine learning Proceedings}. 1990.

\bibitem[Talvitie(2017)]{talvitie2017self}
Talvitie, E.
\newblock Self-correcting models for model-based reinforcement learning.
\newblock In \emph{AAAI Conference on Artificial Intelligence}, 2017.

\bibitem[Tang et~al.(2023)Tang, Guo, Richemond, Pires, Chandak, Munos, Rowland, Azar, Lan, Lyle, et~al.]{tang2022understanding}
Tang, Y., Guo, Z.~D., Richemond, P.~H., Pires, B.~{\'A}., Chandak, Y., Munos, R., Rowland, M., Azar, M.~G., Lan, C.~L., Lyle, C., et~al.
\newblock Understanding self-predictive learning for reinforcement learning.
\newblock \emph{International Conference on Machine Learning}, 2023.

\bibitem[Tunyasuvunakool et~al.(2020)Tunyasuvunakool, Muldal, Doron, Liu, Bohez, Merel, Erez, Lillicrap, Heess, and Tassa]{tunyasuvunakool2020dmcontrol}
Tunyasuvunakool, S., Muldal, A., Doron, Y., Liu, S., Bohez, S., Merel, J., Erez, T., Lillicrap, T., Heess, N., and Tassa, Y.
\newblock dm\_control: Software and tasks for continuous control.
\newblock \emph{Software Impacts}, 6, 2020.

\bibitem[Voelcker et~al.(2022)Voelcker, Liao, Garg, and Farahmand]{voelcker2022value}
Voelcker, C.~A., Liao, V., Garg, A., and Farahmand, A.-m.
\newblock {Value Gradient weighted Model-Based Reinforcement Learning}.
\newblock \emph{International Conference on Learning Representations}, 2022.

\bibitem[Voelcker et~al.(2024)Voelcker, Kastner, Gilitschenski, and Farahmand]{voelcker2024when}
Voelcker, C.~A., Kastner, T., Gilitschenski, I., and Farahmand, A.-m.
\newblock {When does self-prediction help? Understanding Auxiliary Tasks in Reinforcement Learning}.
\newblock In \emph{Reinforcement Learning Conference}, 2024.

\bibitem[Voelcker et~al.(2025)Voelcker, Hussing, Eaton, Farahmand, and Gilitschenski]{voelcker2024mad}
Voelcker, C.~A., Hussing, M., Eaton, E., Farahmand, A.-m., and Gilitschenski, I.
\newblock {MAD-TD}: Model-augmented data stabilizes high update ratio {RL}.
\newblock \emph{International Conference on Learning Representations}, 2025.

\bibitem[Wan et~al.(2019)Wan, Abbas, White, White, and Sutton]{wan2019planning}
Wan, Y., Abbas, Z., White, A., White, M., and Sutton, R.~S.
\newblock Planning with expectation models.
\newblock \emph{International Joint Conference on Artificial Intelligence}, 2019.

\bibitem[Ye et~al.(2021)Ye, Liu, Kurutach, Abbeel, and Gao]{ye2021mastering}
Ye, W., Liu, S., Kurutach, T., Abbeel, P., and Gao, Y.
\newblock Mastering atari games with limited data.
\newblock \emph{Advances in Neural Information Processing Systems}, 2021.

\end{thebibliography}

\newpage
\appendix
\onecolumn

\section{Proofs and mathematical clarifications}
\label{app:proofs}
\label{app:main_proofs}

We provide helper lemmata for \autoref{sec:theory_1} and \autoref{sec:theory_2} in this section.

All of our proofs rely heavily on a standard expansion technique which is used to prove that sample-based losses correctly approximate population losses.
This is found in standard textbooks on learning theory such as \citet{gyrfi2002adt}. 

It proceeds by expanding a loss $\EEX{x\sim p, y\sim q(\cdot|x)}{\left|f(x) - y\right|^2}$ with an expected target $f^*(x)$.
In our case, this is mostly the minimizer of the target loss when evaluating surrogate losses.
Then we obtain $$\EEX{x\sim p, y\sim q(\cdot|x)}{\left|f(x) - y\right|^2} = \EEX{x\sim p, y\sim q(\cdot|x)}{\left|f(x) - f^*(x) + f^*(x) - y\right|^2},$$
and continue expanding from there.
Issues generally arise when $y$ depends on $f(x)$ in some way, or when $f(x)$ itself involves a sampling procedure.

\setcounter{proposition}{0}
\newtheorem{conjecture}{Conjecture}

\subsection{Main propositions: \autoref{sec:model_losses}}
\label{proofs:entry}
\IterVAMLEqual*
\begin{proof}
    Assume $\hat{p}$ fulfills ${\mathcal{L}}^{\P^\pi}_{\mathrm{IterVAML}, 1}\left(\hat{p},\hat{V}|x\right) = 0.$ Then $\mathbb{E}_{\hat{p}}\left[V\left(\hat{x}^{(1)}\right)\right] = \mathbb{E}_{\P^\pi}\left[V\left(x^{(1)}\right)\right].$ By definition of the Bellman operator, we have
    \begin{align}
        [\mathcal{T}^{\P^\pi} V](x) = r(x) + \gamma \mathbb{E}_{x^{(1)}\sim \P^\pi(\cdot|x)}[V(x^{(1)})] = r(x) + \gamma \mathbb{E}_{x^{(1)}\sim \hat{p}(\cdot|x)}[V(x^{(1)})] = [\mathcal{T}^{\hat{p}}V](x).
    \end{align}
\end{proof}
\subsection{Main propositions: \autoref{sec:theory_1}}

\subsubsection{Helper Lemmata}

For the following results, we will use the following notation.
Let $\mathcal{X}$ be a discrete sample space and $p$ a distribution over it. 
Let $f: \mathcal{X} \rightarrow \mathbb{R}$ be a random variable.
Let $\P^*$ be the set of distributions for which both $\P^* = \Argmin_{p'} \mathrm{Var}_{p'}[f(x)]$, and for all $p^* \in \P^*$ $\EEX{p^*}{f(x)} = \EEX{p}{f(x)}$.
Let $k$ be an integer.
Finally, let $g(\xi) = \EEX{x \sim p}{\left(\EEX{y \sim \xi}{f(y)} - f(x)\right)^2} + \frac{1}{k}\mathrm{Var}_\xi[f(x)].$

We assume the following condition on a distribution $p$ and a function $f$ for all the lemmata in this section.
\begin{assumption}
Let $p$ be a probability distribution over $\mathcal{X}$ with $\EEX{p}{f(x)}$, for which no $x$ exists so that $f(x) = \EEX{p}{f(x)}$.
This is an assumption on both $f$ and $p$.
\end{assumption}
This is an important assumption as it guarantees that there is no distribution $q$ with $0$ variance such that $\EEX{q}{f(x)} = \EEX{p}{f(x)}.$
This excludes a corner case of our proof, as fully deterministic environments and models do not lead to an uncalibrated IterVAML loss.

We now obtain three simple lemmata about the minima of the function $g$.

\begin{lemma}\label{lemma3}
There does not exist a distribution $p'$ such that $p'\not\in \P^*$, $\EEX{p'}{f(x)} = \EEX{p}{f(x)}$, and $g(p') \leq g(p^*)$ for any $p^*\in\P$.
\end{lemma}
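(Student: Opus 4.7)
The plan is to reduce $g(\xi)$ to a mean-plus-variance decomposition and observe that, once restricted to distributions with the prescribed mean, $g$ differs from the variance only by an additive constant. The argmin condition defining $\P^*$ then forces the strict inequality.

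Concretely, I would first write $\mu_\xi := \EEX{y\sim\xi}{f(y)}$ and $\mu_p := \EEX{x\sim p}{f(x)}$. Expanding the outer square with an add-and-subtract of $\mu_\xi$ inside the expectation yields
\begin{equation*}
\EEX{x\sim p}{(\mu_\xi - f(x))^2} = (\mu_\xi - \mu_p)^2 + \mathrm{Var}_p[f(x)],
\end{equation*}
so that
\begin{equation*}
g(\xi) = (\mu_\xi - \mu_p)^2 + \mathrm{Var}_p[f(x)] + \tfrac{1}{k}\mathrm{Var}_\xi[f(x)].
\end{equation*}
The term $\mathrm{Var}_p[f(x)]$ is independent of $\xi$, and on the slice of distributions satisfying $\mu_\xi = \mu_p$ the first term vanishes.

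Next I would instantiate this at the two candidates in the lemma. Any $p^* \in \P^*$ satisfies $\mu_{p^*} = \mu_p$ by definition, and the hypothesis of the lemma imposes $\mu_{p'} = \mu_p$. Substituting gives
\begin{equation*}
g(p') - g(p^*) = \tfrac{1}{k}\bigl(\mathrm{Var}_{p'}[f(x)] - \mathrm{Var}_{p^*}[f(x)]\bigr).
\end{equation*}
Because $\P^*$ is defined as the $\argmin$ of $\mathrm{Var}_{\cdot}[f(x)]$ over distributions with mean $\mu_p$, and $p'$ lies in that feasible set but $p'\notin\P^*$, the difference on the right is strictly positive. Hence $g(p') > g(p^*)$, contradicting $g(p') \leq g(p^*)$.

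The only subtlety worth flagging is ensuring that $\P^*$ is nonempty and that the $\argmin$ is meaningfully strict; this is exactly where the standing assumption that no $x$ satisfies $f(x) = \mu_p$ enters. That assumption prevents the degenerate case where the variance minimum could be attained by a Dirac mass (which would collapse the distinction between $p'\notin\P^*$ and $p^*\in\P^*$ in pathological ways). I do not expect a substantial obstacle beyond bookkeeping: the entire argument is a one-line variance/bias decomposition followed by invoking the definition of $\P^*$.
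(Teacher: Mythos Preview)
Your argument is correct and essentially identical to the paper's: both evaluate $g$ on the mean-matching slice to obtain $g(\xi)=\mathrm{Var}_p[f(x)]+\tfrac{1}{k}\mathrm{Var}_\xi[f(x)]$ and then invoke the definition of $\P^*$ to force a contradiction. One small remark: the standing assumption that no $x$ satisfies $f(x)=\mu_p$ is not actually needed for this lemma (the paper uses it only in the next lemma), so your closing caveat about it is unnecessary here.
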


\begin{proof}
    To prove the lemma, we first evaluate $g$ for any distribution $\xi^*$ with $\EEX{\xi^*}{f(x)} = \EEX{p}{f(x)}$ 
    \begin{align}
        g(\xi^*) &= \EEX{x \sim p}{\left(\EEX{y \sim \xi^*}{f(y)} - f(x)\right)^2} + \frac{1}{k}\mathrm{Var}_{\xi^*}[f(x)]\\
        &= \EEX{x \sim p}{\left(\EEX{y \sim p}{f(y)} - f(x)\right)^2} + \frac{1}{k}\mathrm{Var}_{\xi^*}[f(x)]\\
        &= \mathrm{Var}_{p}(f(x)) + \frac{1}{k}\mathrm{Var}_{\xi^*}[f(x)].
    \end{align}
    
    As we constructed the set $\P^*$ so that all members have equal (minimum) variance, we obtain the same $g(p^*)$ for all $p^*\in\P^*$.

    We now set up a contradiction by assuming $p'$ exists. By this assumption
    \begin{align}
        g(p') &< g(p^*) \\
        \mathrm{Var}_{p}(f(x)) + \frac{1}{k}\mathrm{Var}_{p'}[f(x)] &< \mathrm{Var}_{p}(f(x)) + \frac{1}{k}\mathrm{Var}_{p^*}[f(x)]\\
        \mathrm{Var}_{p'}[f(x)] &< \mathrm{Var}_{p^*}[f(x)]
    \end{align}

    For the function $p'$ to exist so that $g(p') < g(p^*)$, we would therefore require $\mathrm{Var}_{p'}[f(x)] < \mathrm{Var}_{p^*}[f(x)]$, which is impossible by the definition of $\P^*$.
    This is a contradiction with the requirements for $p'$ in the lemma, which concludes our proof.
\end{proof}

In the following lemma, we show that under some conditions a distribution $q$ exists which has $g(q) < g(p^*)$ and for which $\EEX{p}{f(x)} \neq \EEX{q}{f(x)}$.
We can rewrite $g$ as
\begin{align}
    g(q) &=\EEX{p}{\left(\EEX{q}{f(y)} - f(x)\right)^2} + \frac{1}{k}\mathrm{Var}_q[f(x)] \\
    &= \EEX{p}{f(x)^2} - 2\EEX{p}{f(x)}\EEX{q}{f(x)} + \EEX{q}{f(x)}^2 + \frac{1}{k}\mathrm{Var}_q[f(x)]\\
    & \leq \EEX{p}{f(x)^2} - 2\EEX{p}{f(x)}\EEX{q}{f(x)} + \EEX{q}{f(x)^2}  + \frac{1}{k}\mathrm{Var}_q[f(x)]\label{ineq:jensen}\\
    & = \mathrm{Var}_p[f(x)] + \frac{k + 1}{k}\mathrm{Var}_q[f(x)] + (\EEX{p}{f(x)} - \EEX{q}{f(x)})^2
\end{align}
\autoref{ineq:jensen} follows from Jensen's inequality.
Intuitively, the function $g$ depends on the squared deviation of the expectation and the variance of both $p$ and $q$.
If the variance of $q$ can be reduced more than the squared deviation of the means, then $\P^*$ will not contain the minimum of $g$.

Note that the conditions on $q$ are sufficient but not necessary, as we use Jensen's inequality to obtain our bound.
In addition, to simplify the proof, we assume that $q$ is a distribution with zero variance.
In practice, any distribution with $\frac{k + 1}{k}\mathrm{Var}_q[f(x)] + (\EEX{p}{f(x)} - \EEX{q}{f(x)})^2 < \mathrm{Var}_{p^*}[f(x)]$ and $\EEX{q}{f(x)} \not = \EEX{p}{f(x)}$ will suffice, but this requirement is somewhat less intuitive to grasp.
As our definition of calibration does not require us to exhaustively characterize all cases for $p$, $p^*$, and $q$, we have chosen this set of conditions which simplifies the proof. 
As our Garnet experiments show, many randomly generated transition distributions admit distributions where the minimizer of $g$ does not match $p$ in expectation.

\begin{lemma}\label{lemma4}
    If there exists a distribution $q$ with $(\EEX{q}{f(x)} - \EEX{p}{f(x)})^2 < \frac{1}{k} \mathrm{Var}_{p^*}[f(x)]$, and $\mathrm{Var}_{q}[f(x)] = 0$, then $g(q) < g(p^*)$ for all $p^*$ and, by the assumptions on $p$ and $f$, $\EEX{x\sim q}{f(x)} \neq \EEX{x \sim p}{f(x)}$.
\end{lemma}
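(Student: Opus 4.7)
The plan is to directly compute $g(q)$ using the zero-variance assumption and then compare it termwise with the expression $g(p^*) = \mathrm{Var}_p[f(x)] + \tfrac{1}{k}\mathrm{Var}_{p^*}[f(x)]$ already derived in the proof of \autoref{lemma3}. Since $\mathrm{Var}_q[f(x)] = 0$, the random variable $f$ is almost surely constant under $q$, so I can set $c := \EEX{q}{f(x)}$ and note that $f(y) = c$ for $q$-almost every $y$. This immediately collapses the inner expectation in the first term of $g(q)$ to the constant $c$.

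Expanding $\EEX{x \sim p}{(c - f(x))^2}$ by inserting $\pm \EEX{p}{f(x)}$ inside the square and using that the cross term vanishes, I get
\begin{align*}
g(q) = \EEX{p}{(c - f(x))^2} + 0 = \mathrm{Var}_p[f(x)] + \bigl(\EEX{p}{f(x)} - \EEX{q}{f(x)}\bigr)^2.
\end{align*}
Applying the hypothesis $(\EEX{q}{f(x)} - \EEX{p}{f(x)})^2 < \tfrac{1}{k}\mathrm{Var}_{p^*}[f(x)]$ yields
\begin{align*}
g(q) < \mathrm{Var}_p[f(x)] + \tfrac{1}{k}\mathrm{Var}_{p^*}[f(x)] = g(p^*),
\end{align*}
which is the first claim and holds uniformly over $p^* \in \P^*$ because every element of $\P^*$ has the same minimum variance.

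For the second claim I argue by contradiction. Suppose $\EEX{q}{f(x)} = \EEX{p}{f(x)}$. Because $\mathrm{Var}_q[f(x)] = 0$, the value $f(x)$ equals $\EEX{q}{f(x)}$ for every $x$ in the support of $q$, which is nonempty. Such an $x$ would then satisfy $f(x) = \EEX{p}{f(x)}$, contradicting the standing assumption on $p$ and $f$ that no such point exists. Hence $\EEX{q}{f(x)} \ne \EEX{p}{f(x)}$.

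The proof is almost entirely mechanical; the only subtle point is using the zero-variance hypothesis on $q$ together with the standing assumption on $(p,f)$ to rule out equality of expectations. The bound $g(q) < g(p^*)$ itself is a one-line application of the hypothesis once the variance term in $g(q)$ has been eliminated, so there is no real obstacle — the work is only to make sure the almost-sure-constancy under $q$ is invoked cleanly.
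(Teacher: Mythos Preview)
Your proof is correct and follows essentially the same route as the paper: both eliminate the $\tfrac{1}{k}\mathrm{Var}_q[f(x)]$ term via the zero-variance hypothesis, decompose the remaining quadratic to obtain $g(q)=\mathrm{Var}_p[f(x)]+(\EEX{p}{f(x)}-\EEX{q}{f(x)})^2$, apply the stated inequality, and then invoke the standing assumption on $(p,f)$ to rule out equality of expectations. One cosmetic remark: the inner expectation $\EEX{y\sim q}{f(y)}$ is already the constant $c$ by definition of expectation, so the zero-variance assumption is only needed to kill the $\tfrac{1}{k}\mathrm{Var}_q$ term, not to ``collapse'' the inner expectation---but this does not affect the argument.
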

\begin{proof}
    Choose any $p^*\in\P^*$. As $\mathrm{Var}_q[f(x)] = 0$, $g(q) = \EEX{x\sim p}{\left(\EEX{y\sim q}{f(y)} - f(x)\right)}$. We can now decompose $g(q)$ as 
    \begin{align}
        g(q) = \EEX{p}{\left(\EEX{q}{f(y)} - f(x)\right)^2} &= \EEX{p}{f(x)^2} - 2\EEX{p}{f(x)}\EEX{q}{f(x)} + \EEX{q}{f(x)}^2 \\
        & = \mathrm{Var}_p[f(x)] + (\EEX{p}{f(x)} - \EEX{q}{f(x)})^2 \\
        &< \mathrm{Var}_p[f(x)] + \frac{1}{k}\mathrm{Var}_{p^*}[f(x)] = g(p^*) \label{eq:final_line}.
    \end{align}
    \autoref{eq:final_line} follows from the assumption on q.
    
    By the assumptions on $p$ and $f$ there does not exist a $x\in\mathcal{X}$ so that $f(x) = \EEX{p}{f(x)}$. However, as $\mathrm{Var}_q[f(x)] = 0$, $\EEX{q}{f(x)} = f(x_i)$ for some $x_i\in\mathcal{X}$. Therefore, $\EEX{q}{f(x)} \neq \EEX{p}{f(x)}$, which concludes the proof.
\end{proof}

As a consequence, we obtain the following, final lemma.

\begin{lemma}\label{lemma5}
    Let $\P^\mathbb{E}$ be the set of all distributions $\xi$ so that $\EEX{p}{f(x)} = \EEX{\xi}{f(x)}.$
    Assume $q$ satisfying \autoref{lemma4} exists.

    Then $\argmin_\xi g(\xi) \not \subseteq \P^\mathbb{E}$.
\end{lemma}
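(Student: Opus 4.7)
The plan is to combine Lemmas 3 and 4 to show that $q$ strictly beats every element of $\P^\mathbb{E}$ in $g$-value, which forces any minimizer of $g$ to lie outside $\P^\mathbb{E}$. First, I would partition $\P^\mathbb{E}$ as $\P^* \cup (\P^\mathbb{E} \setminus \P^*)$, using $\P^* \subseteq \P^\mathbb{E}$ by construction of $\P^*$. Pick any $p^* \in \P^*$; recall $g(p^*)$ is the same for every such $p^*$ since all members of $\P^*$ share the same (minimal) variance and the same expectation as $p$. Lemma 4, applied to the distribution $q$ whose existence is assumed in the statement, gives $g(q) < g(p^*)$. For any $\xi \in \P^\mathbb{E} \setminus \P^*$, Lemma 3 gives $g(\xi) > g(p^*)$, and chaining the two inequalities yields $g(\xi) > g(p^*) > g(q)$. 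Hence every $\xi \in \P^\mathbb{E}$ satisfies $g(\xi) \geq g(p^*) > g(q)$.

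Next I would record that $q$ is itself outside $\P^\mathbb{E}$: Lemma 4 also asserts $\EEX{q}{f(x)} \neq \EEX{p}{f(x)}$, which follows from the standing assumption that no $x \in \mathcal{X}$ realizes $\EEX{p}{f(x)}$ (so a zero-variance $q$ cannot land on the expectation of $p$). This is the key fact: we now have a feasible competitor $q$ that is strictly better in $g$ than everything inside $\P^\mathbb{E}$, yet does not belong to $\P^\mathbb{E}$ itself.

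To conclude, any $\xi^\star \in \argmin_\xi g(\xi)$ must satisfy $g(\xi^\star) \leq g(q) < g(\xi)$ for every $\xi \in \P^\mathbb{E}$, which forces $\xi^\star \notin \P^\mathbb{E}$ and therefore $\argmin_\xi g(\xi) \not \subseteq \P^\mathbb{E}$. Existence of a minimizer is not really at issue here: in the discrete Garnet setting the probability simplex is compact and $g$ is continuous, so the argmin is attained; and even without appealing to existence, the candidate $q$ already certifies that the infimum of $g$ over all distributions is strictly below $\inf_{\xi \in \P^\mathbb{E}} g(\xi)$, which is what the statement morally asserts.

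The main obstacle, such as it is, is purely bookkeeping: keeping straight which lemma handles which half of $\P^\mathbb{E}$, and making sure the strict inequality $g(q) < g(p^*)$ from Lemma 4 survives being chained with the strict inequality $g(\xi) > g(p^*)$ from Lemma 3 so that $q$ beats the whole set $\P^\mathbb{E}$ uniformly. All the arithmetic has been done in Lemmas 3 and 4, so the argument here is essentially a two-line syllogism packaged around that observation.
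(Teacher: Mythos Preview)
Your proposal is correct and follows essentially the same route as the paper: combine Lemma~3 (every $\xi\in\P^\mathbb{E}$ has $g(\xi)\geq g(p^*)$) with Lemma~4 ($g(q)<g(p^*)$ and $q\notin\P^\mathbb{E}$) to conclude that no minimizer of $g$ can lie in $\P^\mathbb{E}$. The paper's proof is the same two-line syllogism you describe, just stated more tersely.
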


\begin{proof}
    The lemma is a direct consequence of \autoref{lemma3} and \autoref{lemma4}. 
    By \autoref{lemma3}, for all $p' \in \P^\mathbb{E}$ $g(p') \geq g(p^*)$. As a consequence of \autoref{lemma4}, there exists at least one $q$ with $g(q) < g(p^*)$ and that $q$ has $\EEX{q}{f(x)} \neq \EEX{p}{f(x)}.$ Then the statement follows directly as $q \not \in \P^\mathbb{E}.$
\end{proof}

\subsubsection{Main results: IterVAML}
\label{proofs:IterVAML}
\IterVAMLUncal*
\begin{proof}
Expanding the empirical IterVAML loss with $k$ samples, obtain
\begin{align}
   & \EEX{\P^\pi}{\hat{\mathcal{L}}^{\P^\pi}_{m\geq1,0}\left(\hat{p},V|x,x^{(m)}\right)} \\
   \quad & =  \EEX{\hat{p},\P^\pi}{\left(\muV{m} - V\left(x^{(m)}\right)\right)^2} \\
   \quad & =  \EEX{\hat{p},\P^\pi}{\left(\muV{m} - \EEX{\hat{p}}{\muV{m}} + \EEX{\hat{p}}{\muV{m}} - V\left(x^{(m)}\right)\right)^2} \\
   \quad & =  \EEX{\hat{p},\P^\pi}{\left(\muV{m} - \EEX{\hat{p}}{V\left(\hat{x}^{(m)}\right)}\right)^2} + \\ 
   \quad &  \quad \quad  2 \underbrace{\EEX{\hat{p},\P^\pi}{\roundb{\muV{m} - \EEX{\hat{p}}{V\left(\hat{x}^{(m)}\right)}}\Biggl(\EEX{\hat{p},\P^\pi}{V\left(\hat{x}^{(m)}\right)} - V\left(x^{(m)}\right)\Biggr)}}_{=0} + \label{eq:tower_again}\\
   \quad &  \quad \quad  \EEX{\hat{p},\P^\pi}{\left(\EEX{\hat{p}}{V\left(\hat{x}^{(m)}\right)} - V\left(x^{(m)}\right)\right)^2}\\
   \quad & =  \underbrace{\EEX{\hat{p},\P^\pi}{\left(V\left(\hat{x}^{(m)}\right) - \EEX{\hat{p}}{V\left(\hat{x}^{(m)}\right)}\right)^2}}_{= \mathrm{Var}} + \underbrace{\EEX{\hat{p},\P^\pi}{\left(\EEX{\hat{p}}{V\left(\hat{x}^{(m)}\right)} - V\left(x^{(m)}\right)\right)^2}}_{(2)}
\end{align}
\autoref{eq:tower_again} is $0$ since $\EEX{\hat{p},\P^\pi}{\muV{m} - \EEX{\hat{p}}{V\left(\hat{x}^{(m)}\right)}} = 0,$ and samples from $\hat{p}$ and $\P^\pi$ are independent.
Note that we can separate the first factor and the second factor since the first one only contains $\hat{X}^{(m)}$ as a random variable, and the second one only contains $x^{(m)}$.
The second factor only includes $\hat{x}^{(m)}$ inside of an expectation, and the expected value is not a random varaible anymore.

The final term can again be decomposed as
\begin{align}
   &\EEX{\hat{p},\P^\pi}{\left(\EEX{\hat{p}}{V\left(\hat{x}^{(m)}\right)} - V\left(x^{(m)}\right)\right)^2} \\
   &\quad=\EEX{\hat{p},\P^\pi}{\left(\EEX{\hat{p}}{V\left(\hat{x}^{(m)}\right)} - \EEX{\P^\pi}{V\left(x^{(m)}\right)} + \EEX{\P^\pi}{V\left(x^{(m)}\right)} - V\left(x^{(m)}\right)\right)^2} \\
   &\quad=\underbrace{\EEX{\hat{p},\P^\pi}{\left(\EEX{\hat{p}}{V\left(\hat{x}^{(m)}\right)} - \EEX{\P^\pi}{V\left(x^{(m)}\right)}\right)^2}}_{= \mathrm{IterVAML}} + \underbrace{\EEX{\hat{p},\P^\pi}{\left(\EEX{\P^\pi}{V\left(x^{(m)}\right)} - V\left(x^{(m)}\right)\right)^2}}_{\text{independent of }\hat{p}}.
\end{align}
The middle term of the binomial expansion again vanishes as $\EEX{\P^\pi}{\EEX{\P^\pi}{V\left(x^{(m)}\right)} - V\left(x^{(m)}\right)} = 0,$ and samples from $\hat{p}$ and $\P^\pi$ are independent.

When we drop the term that is independent of the model, the loss has the form of $g$ in \autoref{lemma5}
\begin{align}
   & \EEX{\P^\pi}{\hat{\mathcal{L}}^{\P^\pi}_{m\geq1,0}\left(\hat{p},V|x,x^{(m)}\right)} \\
   \quad & =  \underbrace{\EEX{\hat{p},\P^\pi}{\left(V\left(\hat{x}^{(m)}\right) - \EEX{\hat{p}}{V\left(\hat{x}^{(m)}\right)}\right)^2}}_{= \mathrm{Var}} + \underbrace{\EEX{\hat{p},\P^\pi}{\left(\EEX{\hat{p}}{V\left(\hat{x}^{(m)}\right)} - \EEX{\P^\pi}{V\left(x^{(m)}\right)}\right)^2}}_{= \mathrm{IterVAML}}.
\end{align}

 Let $\P^\pi(\cdot|x)$ be a distribution so that \autoref{lemma5} holds. Then there exists a $\hat{p}$ with $\hat{\mathcal{L}}^{\P^\pi}_{m\geq1,0}\left(\hat{p},V|x,x^{(m)}\right) < \hat{\mathcal{L}}^{\P^\pi}_{m\geq1,0}\left(\P^\pi,V|x,x^{(m)}\right)$ and $\EEX{\hat{p}}{V\left(\hat{x}^{(m)}\right)} \neq \EEX{\P^\pi}{V(x^{(m)})}.$
\end{proof}

If we use the $k$ sample version of the empirical IterVAML loss instead, we obtain the following decomposition
\begin{align}
    &\EEX{\P^\pi}{\hat{\mathcal{L}}^{\P^\pi}_{m\geq1,0}\left(\hat{p},V|x,x^{(m)}\right)} \\
    &\quad = \underbrace{\frac{1}{k}\EEX{\hat{p},\P^\pi}{\left(V\left(\hat{x}^{(m)}\right) - \EEX{\hat{p}}{V\left(\hat{x}^{(m)}\right)}\right)^2}}_{=\frac{1}{k} \mathrm{Var}} + \underbrace{\EEX{\hat{p},\P^\pi}{\left(\EEX{\hat{p}}{V\left(\hat{x}^{(m)}\right)} - V\left(x^{(m)}\right)\right)^2}}_{\mathrm{IterVAML}}.
\end{align}
The only difference when using more samples is that the variance term is scaled by the factor $\frac{1}{k}$, as the variance of the mean estimator $\sum_{i=1}^k V\left(x^{(m)}_i\right)$ decreases.
Consequently, for larger values of $k$, the condition in \autoref{lemma4} for $g$ becomes stricter.
As $k \rightarrow \infty$, the condition becomes unfulfillable. 
This is also intuitive, as $\frac{1}{k}\sum_{i=1}^k V\left(x^{(m)}_i\right) \rightarrow \EEX{\hat{p}}{V\left(x^{(m)}\right)}$ as $k \rightarrow \infty$ almost surely (assuming standard conditions hold).

\IterVAMLVar*
\begin{proof}
Reusing the previous derivation, we obtain
\begin{align}
    \EEX{\P^\pi}{\hat{\mathcal{L}}_{\mathrm{var}, m}^{k}(\hat{P}, V|x, x^{(m)})} &= \EEX{\P^\pi}{\hat{\mathcal{L}}^{\P^\pi}_{i\geq1,0}\left(\hat{p},V|x,x^{(m)}\right)} - \frac{1}{k}\underbrace{\EEX{\hat{x},\P^\pi}{\left(V\left(\hat{x}^{(m)}\right) - \EEX{\hat{p}}{V\left(\hat{x}^{(m)}\right)}\right)^2}}_{= \mathrm{Var}}\\
    = & \underbrace{\EEX{\hat{p},\P^\pi}{\left(\EEX{\hat{p}}{V\left(\hat{x}^{(m)}\right)} - \EEX{\P^\pi}{V\left(x^{(m)}\right)}\right)^2}}_{\mathrm{IterVAML}}
\end{align}
Therefore, the surrogate loss and target loss are equivalent in expectation.
\end{proof}

\subsubsection{Main results: MuZero}
\label{proofs:MuZero}

The following lemma shows that a function that minimizes a quadratic and a variance term cannot be the minimum function of the quadratic. This is used to show that the minimum of the MuZero value function learning term is not the same as applying the model-based Bellman operator.

\begin{lemma}
\label{lemma_muzero}
Let $g: \mathcal{X} \rightarrow \mathbb{R}$ be a function that is not constant and let $\mu$ be a non-degenerate probability distribution over $\mathcal{X}$. 
Let $\mathcal{L}\roundb{f} = \mathbb{E}_{x\sim\mu}\squareb{\roundb{f(x) - g(x)}^2} + \mathbb{E}_{x\sim\mu}\squareb{f(x) g(x)} - \mathbb{E}_{x\sim\mu}\squareb{f(x)}\mathbb{E}_\mu\squareb{g(x)}$.
There exists a function space $\mathcal{F}$ with $g \in \mathcal{F}$ so that $g \notin \argmin_{f \in \mathcal{F}} \mathcal{L}(f)$.
\end{lemma}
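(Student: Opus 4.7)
The plan is to exploit the structural decomposition of the loss: the first term is a squared-error minimized at $f = g$, while the last two terms combine into $\mathrm{Cov}_\mu(f, g)$, which actively penalizes being correlated with $g$. This built-in tension will force any minimizer to shrink $g$ toward a less-correlated function, so the argument reduces to finding a convenient one-parameter witness class where the effect is easy to compute.

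First I would rewrite the loss using $\mathbb{E}_\mu[f g] - \mathbb{E}_\mu[f]\,\mathbb{E}_\mu[g] = \mathrm{Cov}_\mu(f, g)$, giving
\[
\mathcal{L}(f) \;=\; \mathbb{E}_\mu\bigl[(f - g)^2\bigr] \;+\; \mathrm{Cov}_\mu(f, g).
\]
Plugging in $f = g$ immediately yields $\mathcal{L}(g) = \mathrm{Var}_\mu(g)$, which is strictly positive by the hypotheses that $g$ is non-constant and $\mu$ is non-degenerate. So $g$ already fails to drive the loss to zero, and the question is only whether some strictly better competitor sits in the class.

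Next I would exhibit a minimal witness class $\mathcal{F}$ that contains $g$ yet admits such a competitor. The cleanest choice is the one-parameter scaling family $\mathcal{F} = \{\alpha g : \alpha \in \mathbb{R}\}$, which contains $g$ at $\alpha = 1$. Substituting gives
\[
\mathcal{L}(\alpha g) \;=\; (\alpha - 1)^2\,\mathbb{E}_\mu[g^2] \;+\; \alpha\,\mathrm{Var}_\mu(g),
\]
a strictly convex scalar quadratic in $\alpha$ (the leading coefficient $\mathbb{E}_\mu[g^2]$ is strictly positive because $g$ is non-constant under $\mu$, hence not $\mu$-almost-surely zero). Elementary minimization yields the unique minimizer $\alpha^\ast = 1 - \mathrm{Var}_\mu(g) / (2\,\mathbb{E}_\mu[g^2])$, which differs from $1$ precisely because $\mathrm{Var}_\mu(g) > 0$. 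Hence $\alpha^\ast g \neq g$ achieves strictly smaller loss, and $g \notin \argmin_{f \in \mathcal{F}} \mathcal{L}(f)$.

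The main obstacle is really conceptual rather than computational: one must pick a class simultaneously containing $g$ and containing a strictly better competitor. The scaling family $\{\alpha g\}$ handles both requirements in one stroke and reduces the rest to a one-dimensional quadratic optimization, making every subsequent step routine.
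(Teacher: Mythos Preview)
Your proposal is correct and uses essentially the same approach as the paper: both exhibit the one-parameter scaling family of multiples of $g$ as the witness class. The paper parameterizes it as $\hat{g} = (1-\varepsilon)g$ and argues via the derivative at $\varepsilon = 0$ being $-\mathrm{Var}_\mu(g) \neq 0$, whereas you compute the global minimizer $\alpha^\ast$ directly; this is a cosmetic difference, and your version is arguably tidier.
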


\begin{proof}
The proof follows by showing that there is a descent direction from $g$ that improves upon $\mathcal{L}$. For this, we construct the auxiliary function $\hat{g}(x) = g(x) - \varepsilon g(x)$.
Evaluating $\mathcal{L}(\hat{g})$  yields 
\begin{align*}
     &~\varepsilon^2 \mathbb{E}_\mu\squareb{g(x)^2} + \mathbb{E}_\mu\squareb{\roundb{g(x) - \varepsilon g(x)} g(x)} \\
     &~- \mathbb{E}_\mu\squareb{\roundb{g(x) - \varepsilon g(x)}}\mathbb{E}_\mu\squareb{g(x)}\\
     =&~\varepsilon^2 \mathbb{E}_\mu\squareb{g(x)^2} + \roundb{1-\varepsilon}\mathbb{E}_\mu\squareb{g(x)^2} - \roundb{1 - \varepsilon} \mathbb{E}_\mu\squareb{g(x)}^2.
\end{align*}

Taking the derivative of this function wrt to $\varepsilon$ yields
\begin{align*}
     &~\frac{\mathrm{d}}{\mathrm{d} \varepsilon}\varepsilon^2 \mathbb{E}_\mu\squareb{g(x)^2} + \roundb{1-\varepsilon}\mathbb{E}_\mu\squareb{g(x)^2} - \roundb{1 - \varepsilon} \mathbb{E}_\mu\squareb{g(x)}^2\\
     =&~2\varepsilon~\mathbb{E}_\mu\squareb{g(x)^2} -\mathbb{E}_\mu\squareb{g(x)^2} + \mathbb{E}_\mu\squareb{g(x)}^2.
\end{align*}

Setting $\varepsilon$ to 0, we obtain
\begin{align*}
     &~\mathbb{E}_\mu\squareb{g(x)}^2 - \mathbb{E}_\mu\squareb{g(x)^2} = \text{Var}_\mu\squareb{g(x)}
\end{align*}

By the Cauchy-Schwarz inequality, the variance is only $0$ for a $g(x)$ constant almost everywhere. However, this violates the assumption.
Therefore there exists an $\varepsilon > 0$ so that $\mathcal{L}\roundb{\hat{g}} \leq \mathcal{L}\roundb{g}$.
We now can construct the function space $\mathcal{F}$ so that it includes at least $g$ and $g + \varepsilon g.$

\end{proof}
\MuZeroValue*

\begin{proof}

By assumption, let $\hat{p}$ in the MuZero loss be the true transition kernel $p$. Expand the MuZero loss by $\target{m}$ and take its expectation:
\begin{align}
    &\EEX{\P^\pi}{\hat{\mathcal{L}}^{\P^\pi}_{m,b}(\P^\pi, V| V_{\mathrm{tar}}, x, x^{(m)})}\\ 
    &\quad = \EEX{\hat{p},\P^\pi}{\squareb{ \hat{V}\roundb{\mx{m}} - \TrV{m}}^2}\\
    &\quad = \EEX{\hat{p},\P^\pi}{\squareb{ \hat{V}\roundb{\mx{m}} - \target{m} + \target{m} - \TrV{m}  }^2}\\
    &\quad = \EEX{\hat{p},\P^\pi}{ \roundb{\hat{V}\roundb{\mx{m}} - \target{m}}^2}  +\label{eq:first}\\
    &\quad \quad \quad 2~\EEX{\hat{p},\P^\pi}{\roundb{\hat{V}\roundb{\mx{m}} - \target{m}}\roundb{\target{m} - \TrV{m}}} + \label{eq:second}\\
    &\quad \quad \quad \EEX{\hat{p},\P^\pi}{\roundb{\target{m} - \TrV{m}}^2}\label{eq:third}
\end{align}

We aim to study the minimizer of this term.
The first term (\autoref{eq:first}) is the bootstrapped Bellman residual with a target $V_\mathrm{tar}$.
The third term (\autoref{eq:third}) is independent of $\hat{V}$, so we can drop it when analyzing the minimization problem.

The second term (\autoref{eq:second}) simplifies to
\begin{align}
    \EEX{\hat{p},\P^\pi}{\hat{V}\roundb{\mx{m}}\roundb{\target{m} - \TrV{m}}}
\end{align}
as the remainder is independent of $\hat{V}$ again.

This remaining term however is not independent of $\hat{V}$ and not equal to $0$ either.
Instead, it decomposes into a variance-like term, using the conditional independence of $\mx{1}$ and $\px{1}$ given $\px{0}$:
\begin{align}
    &~\EEX{\hat{p},\P^\pi}{ \hat{V}\roundb{\mx{m}}\roundb{\target{m} - \TrV{m}}}\\
    &\quad =\EEX{\hat{p}}{ \hat{V}\roundb{\mx{m}}\target{m}} - \EEX{\hat{p},\P^\pi}{\hat{V}\roundb{\mx{1}}\TrV{m}}\\
    &\quad =\EEX{\hat{p}}{ \hat{V}\roundb{\mx{m}}\target{m}} - \EEX{\hat{p}}{\hat{V}\roundb{\mx{m}}} \EEX{\P^\pi}{\TrV{m}}.
\end{align}

Combining this with \autoref{eq:first}, we obtain
\begin{align}
  &\EEX{\hat{p},\P^\pi}{\hat{\mathcal{L}}^{\P^\pi}_{m,b}(\P^\pi, V| V_{\mathrm{tar}}, x, x^{(m)})}\\
  &\quad = \EEX{\hat{p},\P^\pi}{ \roundb{\hat{V}\roundb{\mx{m}} - \roundb{\mathcal{T}_{\P^\pi}V_\mathrm{tar}}\roundb{\mx{m}}}^2} + \\
  &~ \quad \quad \EEX{\hat{p}}{ \hat{V}\roundb{\mx{m}}\target{m}} - \EEX{\hat{p}}{\hat{V}\roundb{\mx{m}}} \EEX{\P^\pi}{\TrV{m}}.
\end{align}

The first summand is the Bellman residual, for which the only minimizer is $\mathcal{T}_{\P^\pi}V_\mathrm{tar}$.
However, by \autoref{lemma_muzero}, we can construct a function class so that $\Argmin \EEX{\P^\pi}{\hat{\mathcal{L}}^{\P^\pi}_{m,b}(\P^\pi, V| V_{\mathrm{tar}}, x, x^{(m)})} \not \subseteq \{\mathcal{T}_{\P^\pi}V_\mathrm{tar}\}$
\end{proof}
While our proof only discusses the case $b=1$, the same issue also appears with larger $b$.
In many cases, the problem will be exacerbated by longer rollout horizons $m$ and $b$, as the variance of all functions involved grows with the time horizon.

\subsection{Main propositions: \autoref{sec:theory_2}}
\label{proofs:deterministic}

\subsubsection{Propositions from \citet{bertsekasshreve1978}}
\label{sec:bertsekas}

For convenience, we quote some results from~\citet{bertsekasshreve1978}.
These are used in the proof of Lemma~\ref{lem:deterministic_representation_lemma}.
While some of the definitions are rather technical, it is mostly sufficient to see a \emph{stochastic kernel} as the continuous generalization of the transition matrix in finite MDPs.
The projection used in \autoref{prop:733} is simply a restriction of a set of tuples $(x,y)$ on the $x$ values.
Other topological statements are standard and can be found in textbooks such as~\citet{Munkres2018}.
In the following $\mathcal{C}$ refers to sets of continuous functions.

\begin{proposition}[Proposition~7.30]
\label{prop:730}
    Let $\mathcal{X}$ and $\mathcal{Y}$ be separable metrizable spaces and let $q(\mathrm{d}y|x)$ be a continuous stochastic kernel on $\mathcal{Y}$ given $\mathcal{X}$. If $f\in\mathcal{C}(\mathcal{X}\times \mathcal{Y})$, the function $\lambda: \mathcal{X} \rightarrow \mathbb{R}$ defined by
    \[
        \lambda(x) = \int f(x,y) q(\mathrm{d}y|x)
    \]
    is continuous.
\end{proposition}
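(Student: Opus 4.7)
The plan is to establish continuity of $\lambda$ sequentially, which suffices since $\mathcal{X}$ is metrizable. Fix $x_0 \in \mathcal{X}$ and a convergent sequence $x_n \to x_0$. I would like to show $\lambda(x_n) \to \lambda(x_0)$, and to this end I would split the difference via the triangle inequality as
\[
|\lambda(x_n) - \lambda(x_0)| \leq \underbrace{\int |f(x_n, y) - f(x_0, y)|\, q(\mathrm{d}y | x_n)}_{\text{(I)}} + \underbrace{\left| \int f(x_0, y)\, q(\mathrm{d}y | x_n) - \int f(x_0, y)\, q(\mathrm{d}y | x_0) \right|}_{\text{(II)}}.
\]

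Term (II) is where I would invoke the defining property of a \emph{continuous} stochastic kernel, namely that $x \mapsto q(\cdot | x)$ is continuous in the topology of weak convergence of probability measures. Since $y \mapsto f(x_0, y)$ is continuous, the Portmanteau theorem (applied after a standard bounded-truncation argument, which is legitimate here because bounded continuous test functions characterize weak convergence on separable metrizable spaces) gives (II) $\to 0$. The mild technical wrinkle is that $f$ is only continuous, not bounded; I would address this by approximating $f(x_0, \cdot)$ from above by $f(x_0, \cdot) \wedge N$ for large $N$, controlling the tail via tightness of $\{q(\cdot|x_n)\}_{n \geq 0}$, which itself follows from their weak convergence on the metrizable space $\mathcal{Y}$.

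For term (I), the key observation is that for each compact $K \subset \mathcal{Y}$, the restriction of $f$ to the compact set $\overline{\{x_n\} \cup \{x_0\}} \times K$ is uniformly continuous, so $\sup_{y \in K} |f(x_n, y) - f(x_0, y)| \to 0$. Combined with a tightness choice of $K$ such that $q(K^c | x_n)$ is uniformly small, this drives (I) to zero as well. Putting both pieces together yields $\lambda(x_n) \to \lambda(x_0)$, establishing continuity.

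The main obstacle is handling the interplay between the moving test function $f(x_n, \cdot)$ and the moving measure $q(\cdot | x_n)$ without a boundedness hypothesis on $f$: the naive application of weak convergence fails because the integrand is not fixed. The decomposition above isolates the two sources of variation so that each can be controlled independently — the inside-$f$ variation via uniform continuity on compacts, and the measure variation via the kernel's continuity — with a common tightness argument bridging the two when $f$ is unbounded.
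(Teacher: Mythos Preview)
The paper does not prove this proposition; it is quoted from \citet{bertsekasshreve1978} solely as an ingredient for Lemma~\ref{lem:deterministic_representation_lemma}, so there is no in-paper argument to compare your proposal against. Your sequential decomposition via the triangle inequality is the standard route and, for bounded $f$, is essentially how the result is established in the source.

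Two remarks on the details. First, in the original statement $f$ is \emph{bounded} continuous (the paper's gloss that ``$\mathcal{C}$ refers to sets of continuous functions'' elides this), so your digression on truncating an unbounded $f$ is unnecessary --- and indeed without a bound $\lambda(x)$ need not even be finite, so the proposition would be false as you read it. Second, your claim that tightness of $\{q(\cdot\mid x_n)\}$ ``follows from their weak convergence on the metrizable space $\mathcal{Y}$'' is not valid at the stated generality: Prokhorov's direction requires $\mathcal{Y}$ to be Polish, and on a merely separable metrizable space even a single probability measure can fail to be tight. In the paper's application the state space is compact metrizable, so this gap is harmless there; at full generality one typically closes it by first embedding $\mathcal{Y}$ homeomorphically into a compact metric space (e.g., the Hilbert cube) and running your argument on the image.
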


\newcommand{\proj}{\text{proj}}
\begin{proposition}[Proposition~7.33]
\label{prop:733}
    Let $\mathcal{X}$ be a metrizable space, $\mathcal{Y}$ a compact metrizable space, $\mathcal{D}$ a closed subset of $\mathcal{X}\times\mathcal{Y}$, $\mathcal{D}_x = \{y | (x,y) \in \mathcal{D} \}$, and let $f:\mathcal{D}\rightarrow \mathbb{R}^*$ be lower semicontinuous.
    Let $f^*:\proj_\mathcal{X}(\mathcal{D}) \rightarrow \mathbb{R}^*$ be given by \[
    f^*(x) = \min_{y \in \mathcal{D}_x} f(x,y).
    \]
    Then $\proj_\mathcal{X}(\mathcal{D})$ is closed in $\mathcal{X}$, $f^*$ is lower semicontinuous, and there exists a Borel-measurable function $\varphi: \proj_\mathcal{X}(\mathcal{D}) \rightarrow \mathcal{Y}$ such that $\text{range}(\varphi) \subset \mathcal{D}$ and \[
    f\squareb{x, \varphi(x)} = f^*(x), \quad \forall x \in \proj_\mathcal{X}(\mathcal{D}).
    \]
\end{proposition}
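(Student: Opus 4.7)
The plan is to establish the three conclusions---closedness of $\proj_\mathcal{X}(\mathcal{D})$, lower semicontinuity of $f^*$ together with attainment of the minimum, and existence of a Borel-measurable selector $\varphi$---in that order, since each relies on the previous.

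For the projection, I would invoke the standard topological fact that when $\mathcal{Y}$ is compact, the projection $\pi_\mathcal{X}:\mathcal{X}\times\mathcal{Y}\to\mathcal{X}$ is a closed map (the tube-lemma argument: given $x\notin\proj_\mathcal{X}(\mathcal{D})$, the closed slice $\{x\}\times\mathcal{Y}$ is disjoint from $\mathcal{D}$, and compactness of $\mathcal{Y}$ yields a tube $U\times\mathcal{Y}$ around $\{x\}\times\mathcal{Y}$ still disjoint from $\mathcal{D}$). Applied to the closed set $\mathcal{D}$, this gives closedness of $\proj_\mathcal{X}(\mathcal{D})$ immediately.

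For the value function, attainment is automatic because each section $\mathcal{D}_x$ is closed (hence compact) in $\mathcal{Y}$, and a lower semicontinuous function attains its infimum on a compact set. To show the sublevel set $\{x\in\proj_\mathcal{X}(\mathcal{D}):f^*(x)\le c\}$ is closed for every $c\in\mathbb{R}^*$, I would take $x_n\to x$ in this sublevel set, choose minimizers $y_n\in\mathcal{D}_{x_n}$ with $f(x_n,y_n)\le c$, extract via compactness of $\mathcal{Y}$ a convergent subsequence $y_{n_k}\to y$, observe $(x,y)\in\mathcal{D}$ by closedness of $\mathcal{D}$, and use lower semicontinuity of $f$ to conclude $f^*(x)\le f(x,y)\le\liminf_k f(x_{n_k},y_{n_k})\le c$.

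The hard part will be producing the Borel selector. The approach is to apply a Kuratowski and Ryll-Nardzewski style measurable selection theorem to the argmin multifunction $\Phi(x)=\{y\in\mathcal{D}_x : f(x,y)=f^*(x)\}$, whose values are non-empty (by the attainment just proved) and closed (they are the minimum level set of an lsc function on a compact set). What remains is \emph{weak measurability}: $\{x:\Phi(x)\cap U\ne\emptyset\}$ is Borel for every open $U\subset\mathcal{Y}$. I would verify this by introducing, for the complementary closed set $V=\mathcal{Y}\setminus U$, the restricted value $f_V^*(x)=\inf_{y\in V\cap\mathcal{D}_x}f(x,y)$ (with value $+\infty$ on empty sections), showing by the same compactness and lsc argument that $f_V^*$ is lsc, and then noting that $\Phi(x)\cap U\ne\emptyset$ iff either $f^*(x)<f_V^*(x)$ or $f^*(x)$ is attained on $U\cap\mathcal{D}_x$; both conditions are expressible as Borel combinations of lsc functions. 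As an alternative route, exploiting that $\mathcal{Y}$ is compact metrizable with metric $d$, I would construct $\varphi$ directly by lexicographic nearest-point selection against a fixed countable dense sequence $\{y_n\}\subset\mathcal{Y}$: set $\Phi_0=\Phi$ and recursively let $\Phi_{n+1}(x)$ be the points of $\Phi_n(x)$ at minimum $d(\cdot,y_{n+1})$-distance; each $\Phi_n$ remains closed-valued and weakly measurable by induction using the lsc argument above, and the decreasing intersection $\bigcap_n\Phi_n(x)$ shrinks to a singleton whose entry defines $\varphi(x)$, measurability of which then follows because its preimages of open balls can be expressed in terms of the measurable multifunctions $\Phi_n$. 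Either route produces a Borel $\varphi$ with $(x,\varphi(x))\in\mathcal{D}$ and $f(x,\varphi(x))=f^*(x)$, completing the proof.
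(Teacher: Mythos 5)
You should first note that the paper itself gives no proof of this statement: it is quoted verbatim from \citet{bertsekasshreve1978} as an imported tool, so there is no in-paper argument to compare against; your attempt has to stand on its own. The first two parts of your proof do stand: the tube-lemma argument correctly gives that projection along the compact factor is a closed map, hence $\mathrm{proj}_\mathcal{X}(\mathcal{D})$ is closed, and your sequential argument (pick minimizers on the compact sections $\mathcal{D}_{x_n}$, extract a convergent subsequence in $\mathcal{Y}$, use closedness of $\mathcal{D}$ and lower semicontinuity of $f$) correctly yields both attainment of the minimum and lower semicontinuity of $f^*$; sequences suffice because everything is metrizable, and the extended-real-valued setting causes no trouble.

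The gap is in the weak-measurability verification for the argmin multifunction $\Phi(x)=\{y\in\mathcal{D}_x: f(x,y)=f^*(x)\}$, which is the crux of the selection step. Your claimed equivalence --- $\Phi(x)\cap U\neq\emptyset$ iff $f^*(x)<f^*_V(x)$ or ``$f^*(x)$ is attained on $U\cap\mathcal{D}_x$'' --- is circular: the second disjunct is exactly the condition whose Borel-ness you need to establish, and attainment of an infimum on an \emph{open} set is not on its face a ``Borel combination of lsc functions'' (the infimum over $U\cap\mathcal{D}_x$ need not be attained there, so no single restricted value function captures it); dropping that disjunct makes the equivalence false, since minimizers can lie in both $U$ and $V$ simultaneously. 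The standard repair: since $\mathcal{Y}$ is metrizable, write $U=\bigcup_n F_n$ with each $F_n$ closed (hence compact), define $f^*_{F_n}(x)=\min_{y\in F_n\cap\mathcal{D}_x}f(x,y)$ (set to $+\infty$ where the section is empty; this is lsc by exactly your argument for $f^*$, and the $+\infty$ extension is on an open set), and observe that $\{x:\Phi(x)\cap U\neq\emptyset\}=\bigcup_n\{x: f^*_{F_n}(x)\le f^*(x)\}$, a countable union of Borel sets because lsc extended-real functions are Borel. With weak measurability in hand, Kuratowski--Ryll-Nardzewski applies (nonempty closed values in the Polish space $\mathcal{Y}$), or your lexicographic nearest-point construction goes through (the singleton argument via a dense sequence is fine); but note that the alternative route does not avoid the problem --- its base case is weak measurability of $\Phi_0=\Phi$, and each induction step needs the analogous measurability for the argmin of the continuous function $d(\cdot,y_{n+1})$ over $\Phi_n$, which the same closed-exhaustion trick supplies. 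So the architecture of your proof is the standard one, but the one genuinely delicate step is asserted rather than proved as written.
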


In our proof, we construct $f^*$ as the minimum of an IterVAML style loss and equate $\varphi$ with the function we call $f$ in our proof.
The change in notation is chosen to reflect the modern notation in MBRL -- in the textbook, older notation is used.
\label{app:conjecture}

\subsubsection{Helper Lemma}

The first proposition relies on the existence of a deterministic mapping, which we prove here as a lemma.

\begin{lemma}[Deterministic Representation Lemma]
\label{lem:deterministic_representation_lemma}
    Let $\mathcal{X}$ be a compact, connected, metrizable space. Let $p$ be a continuous kernel from $\mathcal{X}$ to probability measures over $\mathcal{X}$. Let $\mathcal{Z}$ be a metrizable space. Consider a bijective mapping $\varphi: \mathcal{X} \rightarrow \mathcal{Z}$ and any $V: \mathcal{Z} \rightarrow \mathbb{R}$. Assume that they are both continuous. Denote $V_\mathcal{X} = V \circ \varphi$.
    
    Then there exists a measurable function $f^*: \mathcal{Z} \rightarrow \mathcal{Z}$ such that we have $V(f^*(\varphi(x))) = \EEX{p}{V_\mathcal{X}(x')|x}$ for all $x \in \mathcal{X}$.
\end{lemma}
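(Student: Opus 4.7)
The plan is to reduce the problem to a measurable selection problem by combining the intermediate value theorem (which guarantees existence of a preimage under $V_\mathcal{X}$) with Proposition~7.33 (which makes the choice measurable).

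First, I would define $\lambda(x) := \mathbb{E}_p[V_\mathcal{X}(x') \mid x]$. Since $V_\mathcal{X} = V \circ \varphi$ is a composition of continuous functions and $p$ is a continuous stochastic kernel, Proposition~7.30 applied to $f(x,x') = V_\mathcal{X}(x')$ shows that $\lambda: \mathcal{X} \to \mathbb{R}$ is continuous. Next, since $\mathcal{X}$ is compact and connected and $V_\mathcal{X}$ is continuous, its image $V_\mathcal{X}(\mathcal{X})$ is a compact connected subset of $\mathbb{R}$, hence an interval $[a,b]$. Because $\lambda(x)$ is an expectation of $V_\mathcal{X}$-values under a probability measure, we have $\lambda(x) \in [a,b]$ for every $x \in \mathcal{X}$, so by the intermediate value theorem the set
\[
\mathcal{D}_x \;:=\; \{\, y \in \mathcal{X} : V_\mathcal{X}(y) = \lambda(x)\,\}
\]
is non-empty for every $x$.

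The core step is to promote this pointwise existence to a measurable choice. I would consider
\[
\mathcal{D} \;:=\; \{(x,y) \in \mathcal{X} \times \mathcal{X} : V_\mathcal{X}(y) - \lambda(x) = 0\},
\]
which is closed in $\mathcal{X} \times \mathcal{X}$ because $(x,y) \mapsto V_\mathcal{X}(y) - \lambda(x)$ is continuous. Since $\mathcal{X}$ is compact metrizable and $\proj_\mathcal{X}(\mathcal{D}) = \mathcal{X}$ by the IVT argument, Proposition~7.33 applied to the trivial lower semicontinuous cost $f \equiv 0$ on $\mathcal{D}$ yields a Borel-measurable selector $\psi: \mathcal{X} \to \mathcal{X}$ with $(x, \psi(x)) \in \mathcal{D}$, i.e.\ $V_\mathcal{X}(\psi(x)) = \lambda(x)$ for all $x$.

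Finally, I would transport $\psi$ to the latent space. Because $\varphi$ is a continuous bijection from the compact space $\mathcal{X}$ to the Hausdorff (metrizable) space $\mathcal{Z}$, it is a homeomorphism, so $\varphi^{-1}$ is continuous and hence Borel measurable. Defining $f^*(z) := \varphi(\psi(\varphi^{-1}(z)))$ produces a measurable function $\mathcal{Z} \to \mathcal{Z}$, and for every $x \in \mathcal{X}$,
\[
V(f^*(\varphi(x))) \;=\; V(\varphi(\psi(x))) \;=\; V_\mathcal{X}(\psi(x)) \;=\; \lambda(x) \;=\; \mathbb{E}_p[V_\mathcal{X}(x') \mid x],
\]
as required. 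The main obstacle is the measurable selection step: pointwise existence is immediate from IVT, but one has to invoke a genuine selection theorem (here Proposition~7.33 with compactness of the codomain $\mathcal{X}$ and closedness of $\mathcal{D}$) to ensure the chosen $y$ depends measurably on $x$; the rest is a straightforward bookkeeping of continuity and compactness hypotheses, with the connectedness of $\mathcal{X}$ used exactly once to make $V_\mathcal{X}(\mathcal{X})$ an interval.
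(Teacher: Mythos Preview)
Your proposal is correct and follows essentially the same route as the paper: continuity of the expected value via Proposition~7.30, the intermediate value theorem on the connected compact image to guarantee a preimage, and Proposition~7.33 for the measurable selection, with the homeomorphism property of $\varphi$ used to pass between $\mathcal{X}$ and $\mathcal{Z}$. The only cosmetic difference is that the paper works in the latent space $\mathcal{Z}_\mathcal{X}$ and invokes Proposition~7.33 with the cost $h(z,z')=|\theta_{V,\mathcal{Z}}(z)-V(z')|^2$ on the full product, whereas you work in $\mathcal{X}$ and invoke it with the trivial cost $f\equiv 0$ on the closed constraint set $\mathcal{D}$ before transporting via $\varphi$; both are valid applications of the same selection theorem.
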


\begin{proof}
    
Since $\varphi$ is a bijective continuous function over a compact space and maps to a Hausdorff space ($\mathcal{Z}$ is metrizable, which implies Hausdorff), it is a homeomorphism.
The image of $\mathcal{X}$ under $\varphi$, $\mathcal{Z}_\mathcal{X}$ is then connected and compact.
Since $\mathcal{X}$ is metrizable and compact and $\varphi$ is a homeomorphism, $\mathcal{Z}_\mathcal{X}$ is metrizable and compact.
Let $\theta_{V,\mathcal{X}}(x) = \EEX{x' \sim p(\cdot|x)}{V(x')}$.
Then, $\theta_{V,\mathcal{X}}$ is continuous (\autoref{prop:730}).
Define $\theta_{V, \mathcal{X}} = \theta_{V, \mathcal{Z}} \circ \varphi$.
Since $\varphi$ is a homeomorphism, $\varphi^{-1}$ is continuous.
The function $\theta_{V, \mathcal{Z}}$ can be represented as a composition of continuous functions $\theta_{V, \mathcal{Z}} = \theta_{V, \mathcal{X}} \circ \varphi^{-1}$ and is therefore continuous.

As $\mathcal{Z}_\mathcal{X}$ is compact, the continuous function $V$ takes a maximum and minimum over the set $\mathcal{Z}_\mathcal{X}$. 
This follows from the compactness of $\mathcal{Z}_\mathcal{X}$ and the extreme value theorem. 
Furthermore $V_{\min} \leq \theta_{V,\mathcal{Z}}(z) \leq V_{\max}$ for every $z \in \mathcal{Z}_\mathcal{X}$.
By the intermediate value theorem over compact, connected spaces, and the continuity of $V$, for every value $V_{\min} \leq v \leq V_{\max}$, there exists a $z \in \mathcal{Z}_\mathcal{X}$ so that $V(z) = v$.

Let $h: \mathcal{Z}_\mathcal{X} \times \mathcal{Z}_\mathcal{X} \rightarrow \mathbb{R}$ be the function $h(z,z') = \abs{\theta_{V,\mathcal{Z}}(z) - V(z')}^2$.
As $h$ is a composition of continuous functions, it is itself continuous.
Let $h^*(z) = \min_{z' \in \mathcal{Z}_\mathcal{X}} h(z,z')$.
For any $z \in \mathcal{Z}_\mathcal{X}$, by the intermediate value argument, there exist $z'$ such that $V(z') = v$. 
Therefore $h^*(z)$ can be minimized perfectly for all $z \in \mathcal{Z}_\mathcal{X}$.

Since $\mathcal{Z}_\mathcal{X}$ is compact, $h$ is defined over a compact subset of $\mathcal{Z}$.
By \autoref{prop:733}, there exists a measurable function $f^*(z)$ so that $\min_{z'} h(z, z') = h(z, f^*(z)) = 0$.
Therefore, the function $f^*$ has the property that $V(f^*(z)) = \EEX{p}{V(z')|z}$, as this minimizes the function $h$.

Now consider any $x\in\mathcal{X}$ and its corresponding $z=\varphi(x)$.
As $h(z, f^*(z))=\abs{\theta_{V,\mathcal{Z}}(z) - V\roundb{f^*(z)}}^2 = 0$ for any $z \in \mathcal{Z}_\mathcal{X}$, $V(f^*(\varphi(x))) = \theta_{v,\mathcal{Z}}(z) = \EEX{p}{V_\mathcal{X}(x')|x}$ as desired.

\end{proof}

\subsubsection{Main result: Deterministic Model}

\DeterministicModel*

\begin{proof}

As we only deal with single steps here, we use $x^{(1)}$ and $x'$ interchangeably as the latter is simpler and more common nomenclature.
The statement of the proposition itself is written with the $x^{(1)}$ notation to remain consistent with the main body of the paper.

The existence of $f^*$ follows under the stated assumptions (compact, connected, and metrizable state space, metrizable latent space, continuity of all involved functions) from \autoref{lem:deterministic_representation_lemma}.

First, expand the equation to obtain:
\begin{align}
     &\EEX{\P^\pi}{\hat{\mathcal{L}}_{\text{IterVAML}, 1}(f, V | V_\mathcal{X} x, x' } \\
    &\quad = \EEX{\P^\pi}{\squareb{V\roundb{f\roundb{\varphi(x)}} - V_\mathcal{X}(x')}^2}\\
    &\quad = \EEX{\P^\pi}{\Bigl(V\roundb{f\roundb{\varphi(x)}}} - \EEX{\P^\pi}{V_\mathcal{X}(x^{(1)})} + \EEX{\P^\pi}{V_\mathcal{X}(x' - V_\mathcal{X}(x^{(n)}) \Bigr)^2}.
\end{align}
After expanding the square, we obtain three terms:
\begin{align}
     \EEX{\hat{p},\P^\pi}{\hat{\mathcal{L}}_{\text{IterVAML}, 1}(f, V | V_\mathcal{X} x, x' } =
      &\EEX{\hat{p},\P^\pi}{\left|{V\roundb{f\roundb{\varphi(x)}} - \EEX{\P^\pi}{V_\mathcal{X}(x')}}\right|^2} \\
    &\quad + 2 \EEX{\P^\pi}{\squareb{V\roundb{f\roundb{\varphi(x)}} - \EEX{\P^\pi}{V_\mathcal{X}(x')}}\squareb{\EEX{\P^\pi}{V_\mathcal{X}(x')} - V_\mathcal{X}(x') }}\\
    &\quad + \EEX{\P^\pi}{\left|{\EEX{\P^\pi}{V_\mathcal{X}(x')} - V_\mathcal{X}(x') }\right|^2}
\end{align}

Apply the tower property to the inner term to obtain: 
\begin{align}
&2\EEX{\P^\pi}{\squareb{V\roundb{f\roundb{\varphi(x)}} - \EEX{\P^\pi}{V_\mathcal{X}(x')}}\squareb{\EEX{\P^\pi}{V_\mathcal{X}(x')} - V_\mathcal{X}(x') }}\\
&\quad =2\EEX{\P^\pi}{\squareb{V\roundb{f\roundb{\varphi(x)}} - \EEX{}{V_\mathcal{X}(x')}}\underbrace{\EEX{\P^\pi}{\EEX{\P^\pi}{V_\mathcal{X}(x')} - V_\mathcal{X}(x') \middle|x'}}_{=0}} = 0.
\end{align}

Since the statement we are proving only applies to the minimum of the IterVAML loss, we will work with the $\argmin$ of the loss function above.
The resulting equation contains a term dependent on $f$ and one independent of $f$:
\begin{align}
    \argmin_f~ & \EEX{\P^\pi}{\left|{V\roundb{f\roundb{\varphi(x)}} - \EEX{\P^\pi}{V_\mathcal{X}(x')}}\right|^2} + \EEX{\P^\pi}{\left|{\EEX{\P^\pi}{V_\mathcal{X}(x')} - V_\mathcal{X}(x') }\right|^2}\\
    =~ \argmin_f~ & \EEX{\P^\pi}{\left|{V\roundb{f\roundb{\varphi(x)}} - \EEX{\P^\pi}{V_\mathcal{X}(x')}}\right|^2}.
\end{align}

Finally, it is easy to notice that $V\roundb{f^*\roundb{\varphi(x)}} = \EEX{\P^\pi}{V_\mathcal{X}(x')}$ by the definition of $f^*$.
Therefore $f^*$ minimizes the final loss term and, due to that, the IterVAML loss.

\end{proof}

Comparing these results to \autoref{lemma5}, we can clarify one potential confusion about our results.
The assumptions which are required for \autoref{prop:3_1} ensure that the condition in \autoref{lemma4} cannot be fulfilled in a continuous space, as there always exists a single point $y$ for which $f(y) = \EEX{p}{f(x)}$.
In this case, the optimal deterministic IterVAML model $f^*$ and the minimum variance distribution $p^*$ coincide.
However, our definition of calibration requires the loss to recover optimal minima for any function class that includes them.

\newpage

\section{Implementation Details -- \autoref{sec:empirical1}: Garnet}

Each Garnet is defined by a randomly generated transition matrix and a random reward function.
Across all experiments we use state spaces of size $|\mathcal{X}| = 50$.
For each state $x_i$, we sample $k=10$ successor states at random without replacement.
Then we assign each successor state $x_j$ a weight $\omega_{ij} \sim \mathcal{N}(0, 1)$ and set the weight of all non-successor states to the floating point minimum.
The transition matrix is computed via the softmax function.
The reward is sampled as $r(x_i) \sim \mathcal{N}(0,1).$

This setup allows us to vary the stochasticity of the environment naturally.
The softmax temperature parameter $\tau$ controls the stochasticity.
As $\tau \rightarrow \infty$, the problem becomes deterministic with $p(x_j|x_i) = 1$ for the state $x_j$ with $\omega_{ij} = \argmax_j \omega_{i,j}.$
For $\tau \rightarrow \infty$, the probabilities for all successor states become equal $p(x_j|x_i) = \frac{1}{k}.$
In our experiments, we vary $\tau$ such that for almost all Garnets we obtain full determinism and equal transition probabilities, which empirically happens in the range $\tau \in [0.001, 10.0].$

\section{Implementation Details --  \autoref{sec:empirical2}: DM Control}
\label{app:model_design}

\begin{table}
    \centering
    \begin{tabular}{l|c}
    Hyperparameter & Value \\\hline
    Batch size & 128\\
    Discount $\gamma$ & 0.99 \\
    Actor learning rate $\alpha_\pi$  & 0.0003 \\
    Critic learning rate $\alpha_Q$ & 0.0003 \\
    Model learning rate $\alpha_{\hat{p}}$  & 0.0003 \\
    Encoder learning rate $\alpha_\varphi$ & 0.0001 \\
    \end{tabular}
    \begin{tabular}{l|c}
    Hyperparameter (cont.) & Value (cont.) \\\hline
    Model rollout depth $m$ & 1 \\
    Model bootstrap depth $b$ & Varied (0 and 1) \\
    Model samples $k$ & Varied (1 and 4) \\
    Proportion real $\rho$ & 0.9 \\
    Latent dimension & 512 \\
    Gradient clipping & 10 \\
    \end{tabular}
    \caption{Hyperparameter of the deep learning experiments. Aside from the $(m,b)$-(C)VAML relevant hyperparameters, we keep all others consistent across environments and algorithms.}
    \label{tab:hyperparams}
\end{table}

\subsection{Architecture}
The original MuZero paper \citep{schrittwieser2020mastering} provides large-scale experiments on the ALE benchmark \citep{bellemare13arcade}.
However, the original algorithm and the ALE suite are computationally extremely expensive to run.
In addition, to the best of our knowledge, no reliable open-source replication of the closed-source MuZero and Probabilistic MuZero models and algorithms exists.
Instead, we use the TD-MPC architecture and algorithm \citep{hansen2022temporal} as our main reference.
This implementation uses the MuZero loss but is significantly cheaper and faster to run in a small-scale experimental setting, while still solving non-trivial control tasks.

We adopt the architecture and model-based MPC search procedure of \citet{hansen2022temporal} as our baseline model.
In addition to the $L_2$ based deterministic latent setup, we also use a Gaussian latent model as our stochastic environment model.
This implementation follows similar ideas used in observation-space models, such as \citet{pets,mbpo}.
While the stochastic environment models used in Dreamer \citep{hafner2021mastering} would have been an alternative, it has not shown strong performance in continuous control tasks and is significantly more expensive to run.
In addition, \citet{voelcker2024when} shows that observation-space reconstruction can be a suboptimal auxiliary task for aligning a latent space with value function prediction.
Finally, we adopt the joint model-free and model-based training proposed in \citet{voelcker2024mad}.
For the MuZero setting ($b=1$) we update the value function both for the state and the predicted latent state.
In the IterVAML setting, there is no meaningful way to update the model's predicted value function, and therefore we only update the current state's value function.

Code is provided in \url{https://github.com/adaptive-agents-lab/CVAML}.

Hyperparameters can be found in \autoref{tab:hyperparams}.

\subsection{Environment}
\label{app:stoch_env}
As several recent papers have shown, the vast majority of environments in the DM Control suite are effectively saturated. 
We, therefore, restrict our experiments to the most difficult domains, \emph{humanoid} and \emph{dog}, on which large performance differences still appear \citep{nauman2024bigger,voelcker2024mad,fujimoto2025towards}.

\subsection{Additional results}

In addition to the raw performance graphs in the main paper, we present the average model entropy and VAML loss curves in \autoref{fig:vaml_entropy}.
As expected, we find that the CVAML loss leads to a higher entropy model compared to the VAML loss.
We compute the empirical differential entropy of the model prediction over the replay buffer as $$- \sum_{x,a \in \mathcal{D}} p(z'|\varphi(x),a) \log p(z'|\varphi(x),a),$$ with $z'$ sampled from the latent model.
This however does not lead to a noticeable difference in the VAML error itself.
We also present the individual reward curves for each task in both domains in \autoref{fig:all_tasks_rews}.

\begin{figure}[h]
    \centering
    \includegraphics[width=0.9\textwidth]{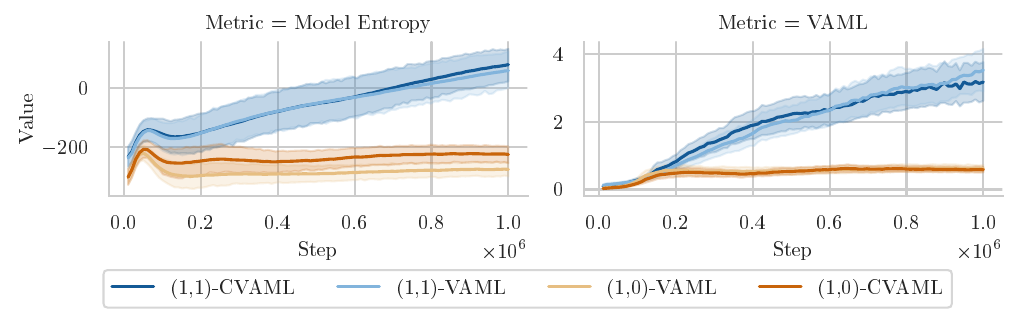} 
    \caption{Model entropy (left) and loss value for the VAML-losses (right) aggregated across all environments. The model entropy of $(1,0)$-CVAML differs significantly. However, this does not translate to a pronounced difference in the VAML error itself. Therefore, while the calibration term does lead to a higher variance, this does not necessarily translate into an improvement in performance. In $(1,1)$-VAML compared with the calibrated version, the model entropy does not differ significantly. However, without calibration, the VAML error seems to be growing slightly stronger. Given the large deviation in performance we observe in the main results for $(1,1)$-VAML, we conjecture that correcting the value function learning term is significantly more important for performance.}
   \label{fig:vaml_entropy}
\end{figure}

\begin{figure}[h]
    \centering
    \includegraphics[width=0.9\linewidth]{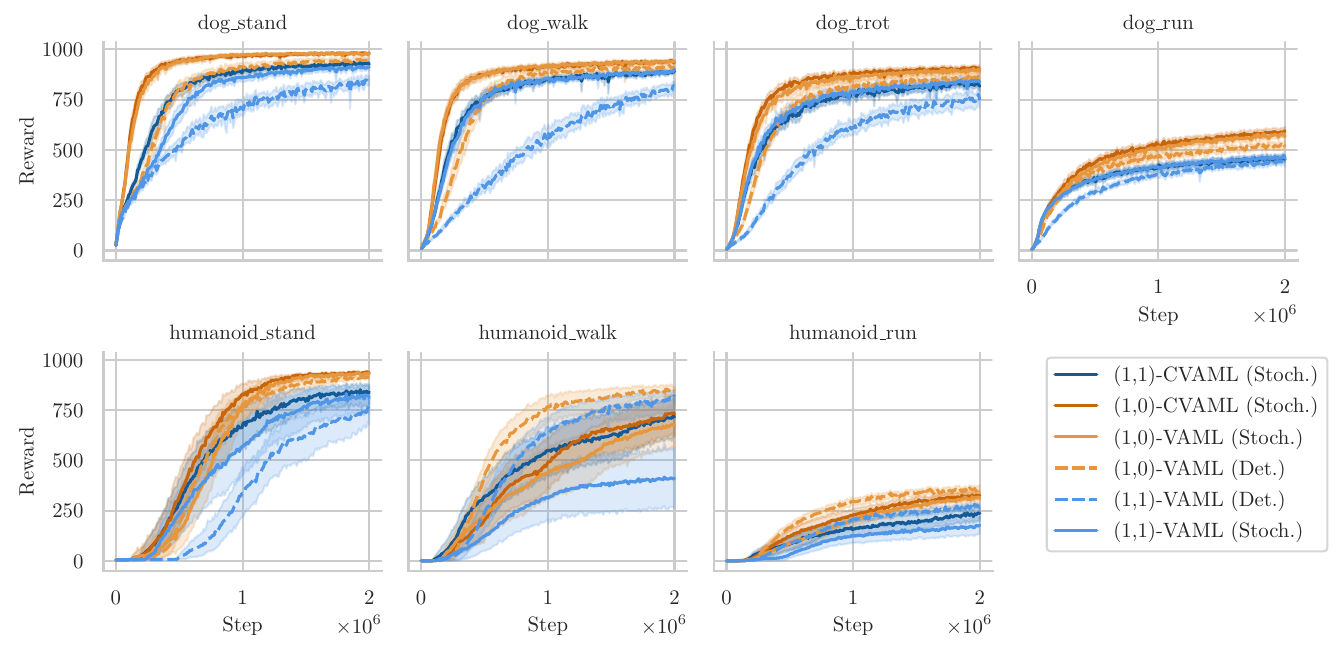}
    \caption{Episode returns per environment averaged over 30 seeds.}
    \label{fig:all_tasks_rews}
\end{figure}

\FloatBarrier
\newpage

\subsection{Ablations}

We provide additional ablations on the model loss here.

In the first set of experiments (\autoref{fig:abl1}), we remove the CVAML loss from the model. 
We observe deteriorating agent performance, especially with the stochastic models, and in the humanoid tasks.
In addition, we compare against a model-free TD3 baseline that uses the same architecture as our model based versions.

In the second set (\autoref{fig:abl2}), we ablate the auxiliary loss.
Again, we find that the auxiliary loss helps especially in the humanoid experiments.
In addition, the $(1,1)$-CVAML baseline suffers significantly more from the removal of the auxiliary loss than the $(1,0)$-CVAML loss.
Upon further investigation we observe that the bootstrapped TD target of the $(1,1)$-CVAML loss leads to less stable learning compared to the value function target.

\begin{figure}[h]
    \centering
    \includegraphics[width=0.8\linewidth]{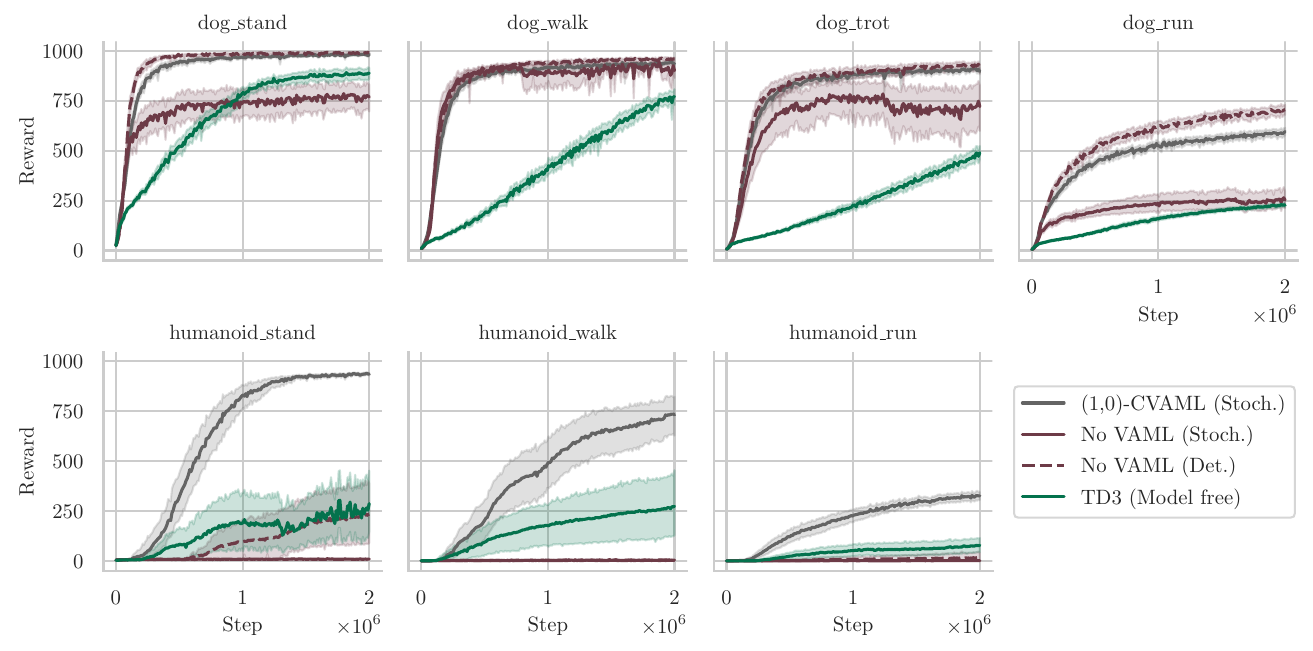}
    \caption{Performance of a model-free baseline \cite{td3}, and the model-based algorithm using only the auxiliary loss for model training (No VAML), compared to (1,0)-CVAML.
     The model-free baseline fails to achieve strong returns on any problem. The No VAML baseline achieves slightly better returns on the dog\_run task, but is unable to achieve any performance in the humanoid tasks.}
    \label{fig:abl1}
\end{figure}

\begin{figure}[h]
    \centering
    \includegraphics[width=0.8\linewidth]{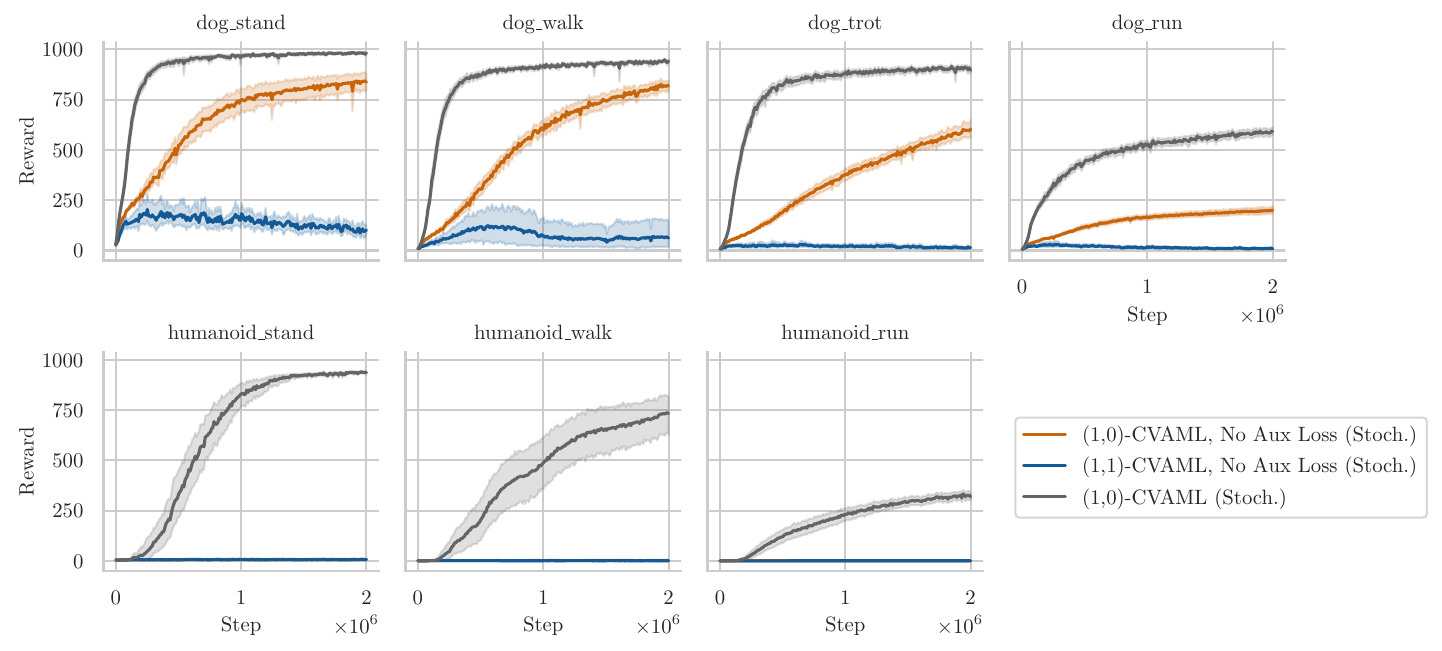}
    \caption{Performance comparison between agents trained using the $(1,0)$-CVAML and $(1,1)$-CVAML without the auxiliary loss to $(1,0)$-CVAML trained with the auxiliary BYOL-style loss. The inclusion of the auxiliary loss is crucial for performance, both $(1,0)$-CVAML and $(1,1)$-CVAML without the auxiliary struggle on all the dog tasks, compared to the agent trained with both losses, the effect is even more pronounced for the $(1,1)$-CVAML variant. On the humanoid task both variants without the auxiliary loss exhibit a complete failure, which further exemplifies the importance of this loss.}
    \label{fig:abl2}
\end{figure}

\FloatBarrier

\newpage
\subsection{Garnet Policy Iteration}

Here we extend the Garnet experiments in \autoref{sec:empirical1} to the control case.
In each iteration, the policy's value function is estimated with the model-based losses, following the Garnet experiments in the main paper. The policy is improved in a greedy fashion.
The problem is a modified 5x5 cliffwalk environment \url{https://gymnasium.farama.org/environments/toy_text/cliff_walking/} where the likelihood of moving in the intended direction is given by \emph{temp}.
Results are presented in \autoref{fig:cliffwalk}

Results are consistent with the garnet experiments in the main paper, except for the surprising outlier of the uncalibrated $(1,1)$-VAML loss in the almost-deterministic case. 
This might be due to the different structures of the cliffwalk environment compared to the more diverse Garnet environments.
However, this only holds for low rank deterministic case (top right corner).

\begin{figure}[h]
    \centering
    \begin{minipage}{0.32\textwidth}
        \includegraphics[width=\linewidth]{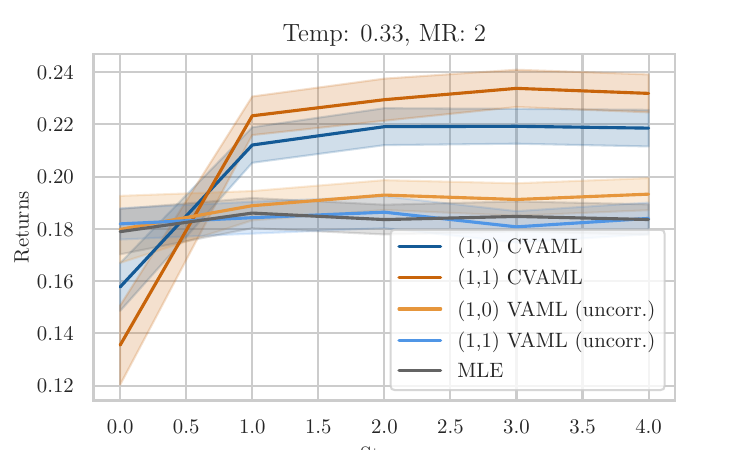}
    \end{minipage}
    ~
    \begin{minipage}{0.32\textwidth}
        \includegraphics[width=\linewidth]{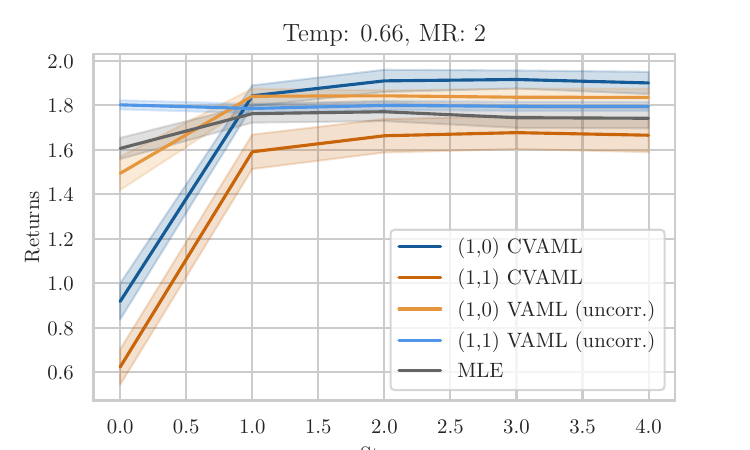}
    \end{minipage}
    ~
    \begin{minipage}{0.32\textwidth}
        \includegraphics[width=\linewidth]{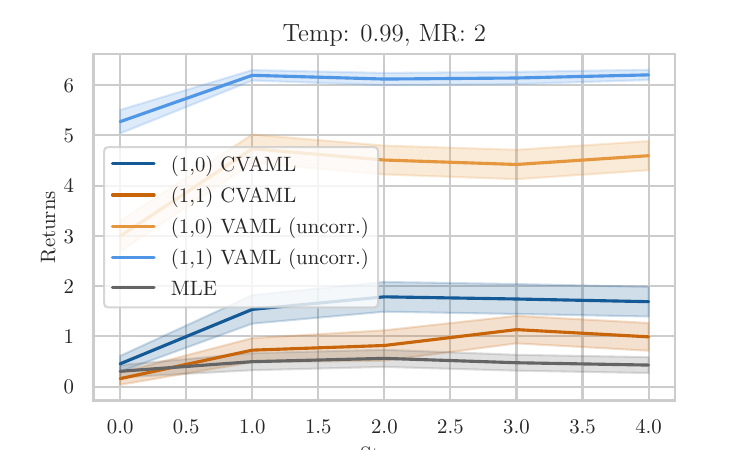}
    \end{minipage}
    
    \begin{minipage}{0.32\textwidth}
        \includegraphics[width=\linewidth]{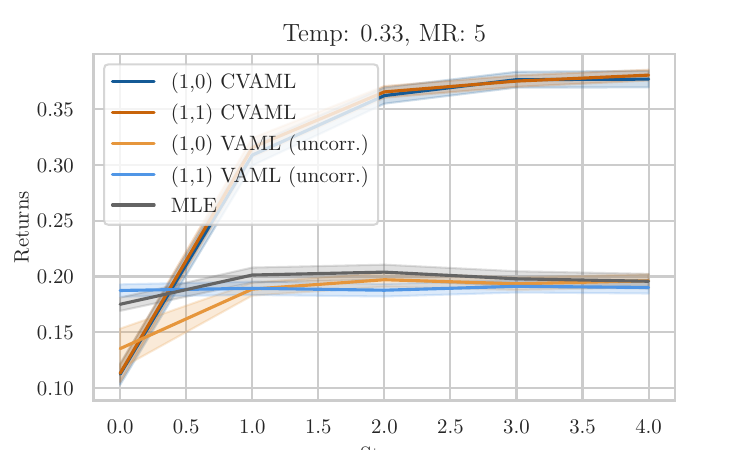}
    \end{minipage}
    ~
    \begin{minipage}{0.32\textwidth}
        \includegraphics[width=\linewidth]{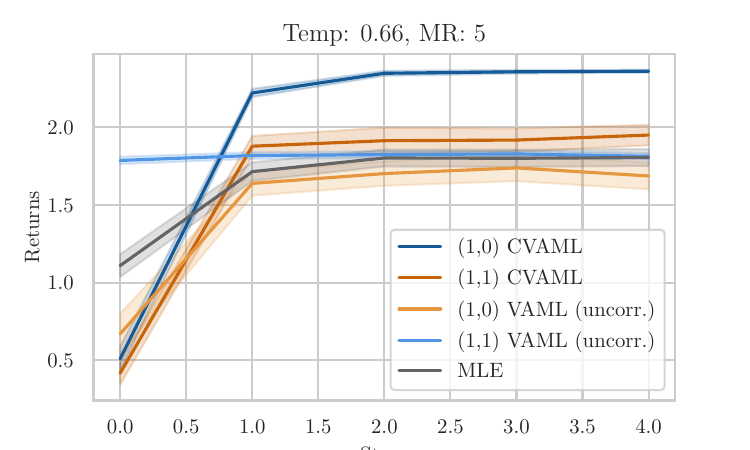}
    \end{minipage}
    ~
    \begin{minipage}{0.32\textwidth}
        \includegraphics[width=\linewidth]{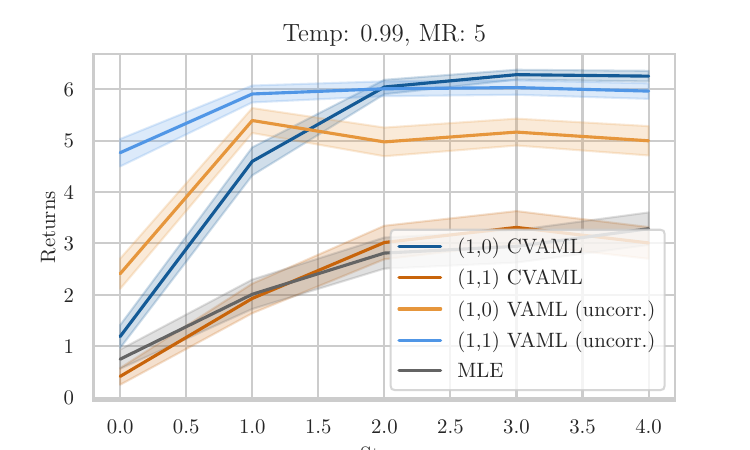}
    \end{minipage}
    
    \begin{minipage}{0.32\textwidth}
        \includegraphics[width=\linewidth]{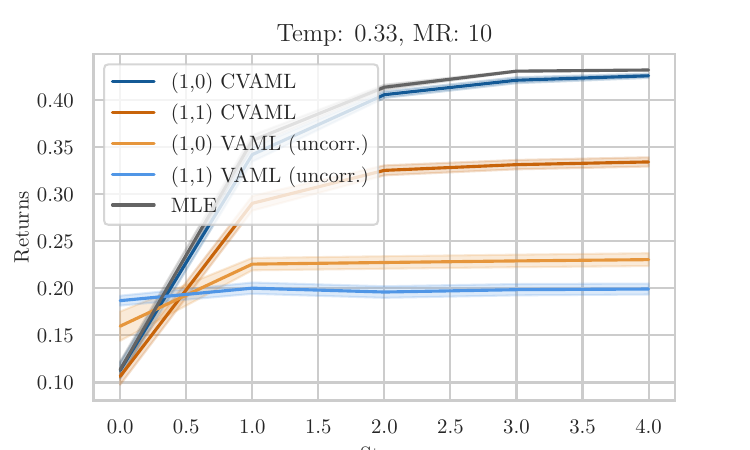}
    \end{minipage}
    ~
    \begin{minipage}{0.32\textwidth}
        \includegraphics[width=\linewidth]{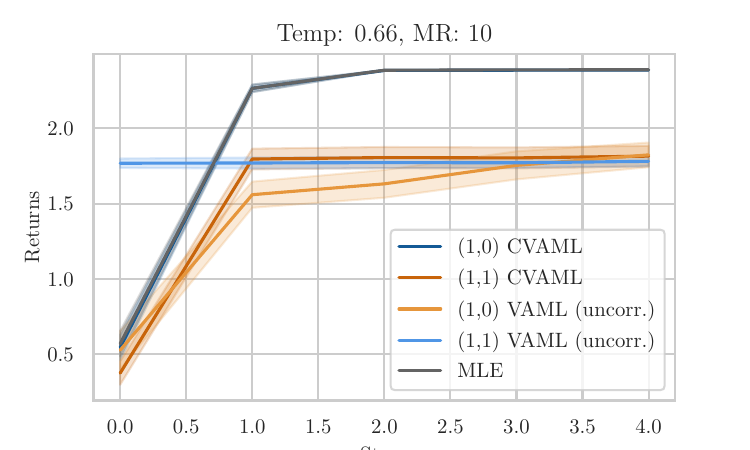}
    \end{minipage}
    ~
    \begin{minipage}{0.32\textwidth}
        \includegraphics[width=\linewidth]{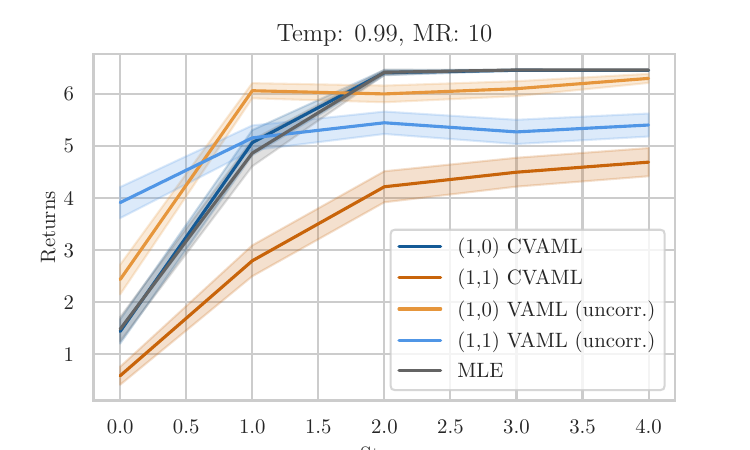}
    \end{minipage}
    
    \begin{minipage}{0.32\textwidth}
        \includegraphics[width=\linewidth]{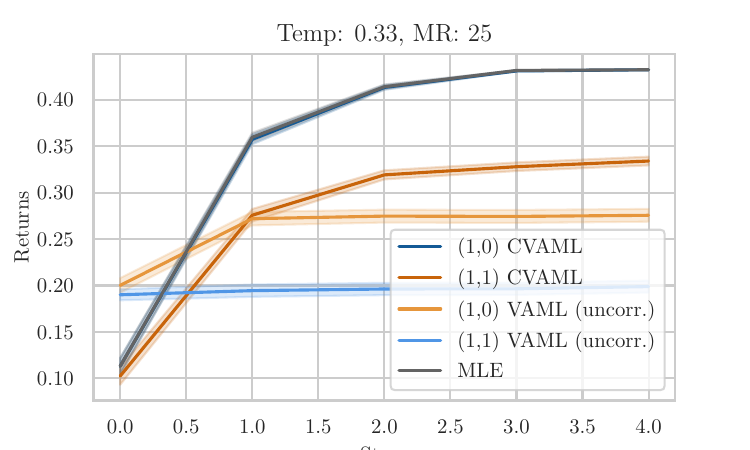}
    \end{minipage}
    ~
    \begin{minipage}{0.32\textwidth}
        \includegraphics[width=\linewidth]{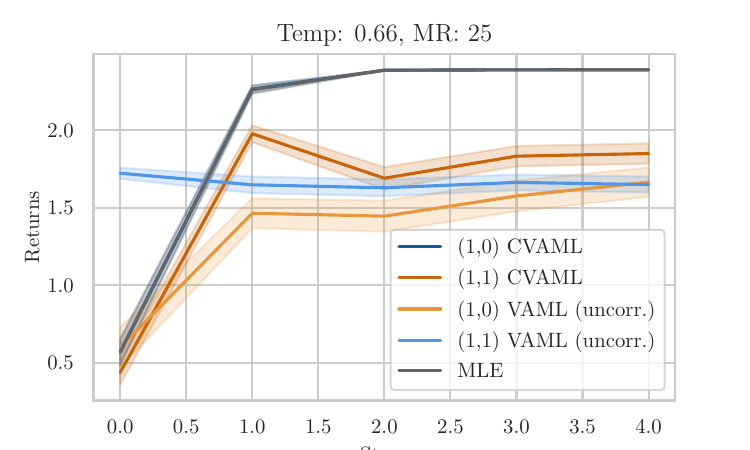}
    \end{minipage}
    ~
    \begin{minipage}{0.32\textwidth}
        \includegraphics[width=\linewidth]{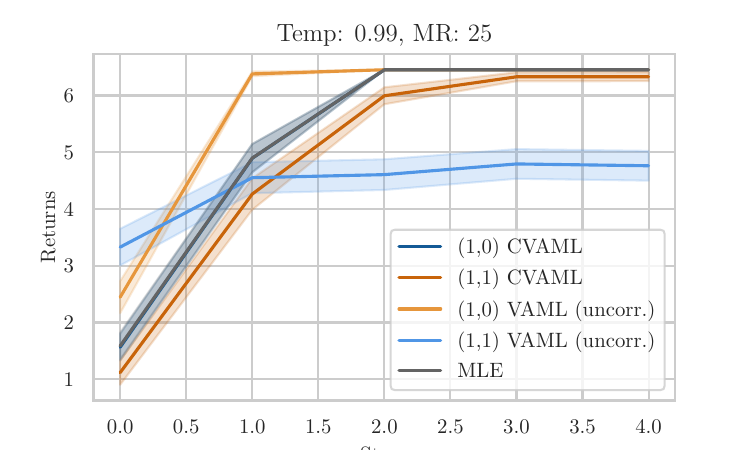}
    \end{minipage}
    
    \caption{Return curves for cliffwalk policy iteration. Each curve shows confidence intervals over 1000 seeds. Each row is a different model rank (see description in the main paper), each column a different temperature.}
    \label{fig:cliffwalk}
\end{figure}

\end{document}